\documentclass[preprint,11pt]{elsarticle}

\usepackage[T1]{fontenc}
\usepackage{lmodern}
\usepackage{amssymb}
\usepackage{amsmath}
\usepackage{amsthm}
\usepackage{graphicx}
\usepackage{subcaption}
\usepackage{wrapfig}
\usepackage{enumitem}
\usepackage{algorithm}
\usepackage{algpseudocode}
\usepackage{booktabs}
\usepackage[margin=2cm]{geometry}
\usepackage{hyperref}
\usepackage{xurl}
\hypersetup{colorlinks=true,linkcolor=blue!50!black,citecolor=blue!50!black,urlcolor=blue!50!black}

\newtheorem{theorem}{Theorem}
\newtheorem{lemma}{Lemma}
\newtheorem{proposition}{Proposition}
\newtheorem{corollary}{Corollary}
\theoremstyle{definition}
\newtheorem{assumption}{Assumption}
\newtheorem{definition}{Definition}
\newtheorem{remark}{Remark}
\newtheorem{example}{Example}

\makeatletter
\def\ps@pprintTitle{\let\@oddhead\@empty\let\@evenhead\@empty
  \def\@oddfoot{\footnotesize\itshape\hfill\today}\let\@evenfoot\@oddfoot}
\makeatother

\begin{document}

\begin{frontmatter}

\title{Extending Harmonic Function Methods via Root Locus for Coordinated Path Generation}

\author[Pisa]{Matteo Frascarelli}
\ead{matteo.frascarelli@phd.unipi.it}
\author[Piaggio]{Lucia Pallottino}
\ead{lucia.pallottino@unipi.it}
\author[Pisa]{Alberto Landi\corref{cor}}
\ead{alberto.landi@unipi.it}
\cortext[cor]{Corresponding author.}

\affiliation[Pisa]{organization={Department of Information Engineering, University of Pisa},
            city={Pisa}, country={Italy}}
\affiliation[Piaggio]{organization={Research Center E. Piaggio, Department of Information Engineering, University of Pisa},
            city={Pisa}, country={Italy}}

\begin{abstract}
We consider the offline generation of coordinated paths for $N$ interchangeable robots and $N$ goals in planar environments with polygonal obstacles. After a conformal map to a circular domain, starts and goals become the poles and zeros of a rational function $R$, and the paths are the branches of its Generalized Root Locus (GRL), i.e. the streamlines of the logarithmic potential $-\log R$. We show that, for every phase outside a finite set of critical phases and a set of boundary phases, the induced gradient motion drives each robot to a distinct goal along pairwise disjoint paths that never reach the obstacles; the robot--goal assignment is produced by the construction, and with the gain used as a clock all robots move as the closed-loop roots of the GRL and arrive together at a prescribed time. Obstacles become streamlines through successive Milne--Thomson reflections, and for a finite number of reflections a finite test certifies safety for the potential that is actually computed. Reciprocal root-locus planning is recovered as a special case. The method is illustrated on two polygonal workspaces, 100 random instances compared with independent harmonic planning and artificial potentials, and a scene with ten robots and ten obstacles.
\end{abstract}

\begin{keyword}
Generalized root locus \sep Harmonic functions \sep Conformal mapping \sep Obstacle avoidance \sep Multi-robot systems \sep Path planning
\end{keyword}

\end{frontmatter}

\section{Introduction}\label{sec:introduction}

Coordinated motion of several robots in a cluttered environment is usually organized in two layers: an \emph{offline} layer that builds feasible global paths, and an \emph{online} layer that tracks them and handles finite size, actuation limits and close interactions. This paper is about the first layer. We consider $N$ interchangeable robots and $N$ goals in a planar workspace with polygonal obstacles, and look for smooth paths that are collision-free by construction; assigning robots to goals is part of the problem.

Potential-based methods encode the whole problem in a scalar field. The artificial potential field (APF) of Khatib \cite{Khatib_Potential_} may have local minima; harmonic function methods (HFM) \cite{Connolly_Harmonic_Plan} remove them, since solutions of the Laplace equation have no interior extrema. Navigation functions \cite{RimonKoditschek1992} also exclude spurious minima, but need a map onto a sphere world, hard to build for polygonal scenes. For several robots, electrostatic analogies have been used without obstacles \cite{Hyun_Electro_Potentials}, and root-locus (RL) constructions have produced collision-free formations \cite{Coll_free} and single-robot paths among diffusive hazards \cite{Opt_Obs_Using_RL}. The reciprocal root locus (RRL) \cite{Landi,BALESTRINO} handles polygonal workspaces through a conformal map, but enforces obstacle avoidance by heuristic corrections, without a proof of non-penetration.

The Generalized Root Locus (GRL) bridges these lines of work. If starts and goals are the poles and zeros of a rational function $R$, the GRL branches are the streamlines of the logarithmic potential $-\log R$ \cite{On_modeling_RL}: integral curves of a harmonic field, as in HFM, with the branch structure of the root locus. The invariance of the stream function along the gradient motion keeps each robot on its branch, but does not say where the robots go, whether two paths can meet, or whether the obstacles are respected. This paper answers these questions:
\begin{itemize}[leftmargin=*,itemsep=2pt,topsep=3pt]
\item \emph{Certified coordinated motion} (Theorem~\ref{thm:main}). For every phase outside a finite set of critical phases and a set of boundary phases, the GRL in the workspace consists of exactly $N$ disjoint arcs, each joining a start to a different goal. The gradient motion converges along them, the robot--goal assignment is induced by the construction, and with the gain used as a clock the robots leave the starts together, move as the closed-loop roots of the GRL and arrive together at a prescribed time. The result holds on any bounded domain with obstacles.
\item \emph{Certificate for computed obstacle potentials} (Proposition~\ref{prop:certificate}). Obstacles are made streamlines by successive Milne--Thomson reflections. For a finite reflection depth, the boundary residual bounds the set of excluded phases, and a finite test certifies obstacle safety for the potential that is actually computed.
\item \emph{RRL as a special case} (Section~\ref{subsec:rrl}). The RRL planner \cite{Landi,BALESTRINO} is recovered with the same guarantees, which supplies the boundary-invariance argument that was missing.
\item \emph{Algorithm and experiments} (Sections~\ref{sec:procedure} and~\ref{sec:numerics}): the complete procedure, with the critical set verified by the argument principle, evaluated on reference examples, in a Monte Carlo comparison with independent harmonic planning and artificial potentials, and on a scene with ten robots and ten obstacles.
\end{itemize}

Code and an interactive companion notebook are available online (see \emph{Code availability}).

\section{Problem formulation and GRL potential}\label{sec:problem}

\subsection{Workspace and canonical domain}\label{subsec:workspace}

Let $P\subset\mathbb{C}$ be a bounded workspace whose boundary is the union of $N_{\mathrm{obs}}$ pairwise disjoint simple closed polygons $\Gamma_0,\dots,\Gamma_{N_{\mathrm{obs}}-1}$, with $\Gamma_0$ the outer one. At the planning level the robots are interchangeable point agents. $N$ initial positions and $N$ goals are given in $P$. For robots of radius $\varrho$, a margin is introduced by inflating every obstacle and shrinking the outer boundary by $\varrho$ before planning. This handles the clearance between robots and environment only. Mutual safety is not implied by disjoint paths and requires the distance between distinct paths to exceed $2\varrho$; this condition does not depend on time, is checked after planning and can be included in the choice of the phase.

Following \cite{Landi,BALESTRINO}, the workspace is mapped onto a circular domain by a conformal map
\begin{equation}\label{eq:conf_map}
z=f(s),\qquad f:P\to D,
\end{equation}
where $D$ is the disk bounded by a circle $C_0$ minus $N_{\mathrm{obs}}-1$ closed disks bounded by circles $C_i=\{|z-c_i|=r_i\}$, with $f(\Gamma_i)=C_i$. Schwarz--Christoffel formulas are semi-explicit only for simply connected polygons \cite{Driscoll_SC_book,DeLillo_MC_SC}; we compute $f$ and $f^{-1}$ with the toolbox \texttt{PlgCirMap} \cite{Nasser_toolbox}, based on Nasser's fast Koebe iteration \cite{Nasser_fastcomp}, with cost $O(N_{\mathrm{obs}}^2N_\Gamma+N_{\mathrm{obs}}N_\Gamma\log N_\Gamma)$ for $N_\Gamma$ nodes per boundary component. The map is normalized by an interior point $\alpha\in P$ sent to the origin, taken in a central free region (Fig.~\ref{fig:conf_reduction}).

All the analysis is carried out in $D$, and the paths are mapped back by $f^{-1}$. The images in $D$ of starts and goals are still denoted by $p_1,\dots,p_N$ and $\zeta_1,\dots,\zeta_N$. The planning problem in $D$ is to find $N$ smooth, pairwise disjoint curves in $D$ joining the poles to the zeros, with a pairing produced by the construction itself.

\begin{figure}[tb]
\centering
\begin{subfigure}{0.3\textwidth}
    \centering
    \includegraphics[width=\linewidth]{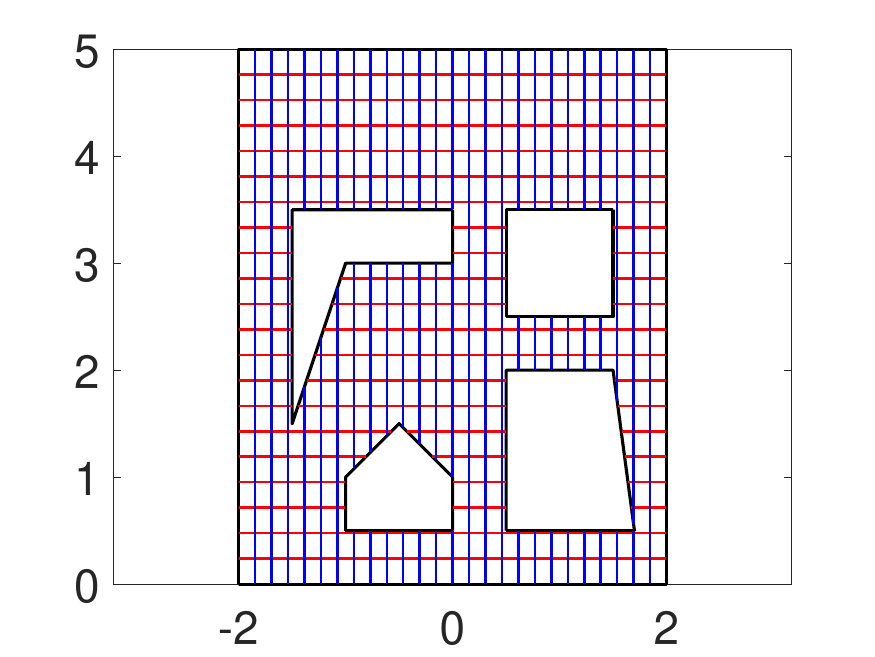}
    \caption{Workspace $P$}
\end{subfigure}
\hspace{0.06\textwidth}
\begin{subfigure}{0.3\textwidth}
    \centering
    \includegraphics[width=\linewidth]{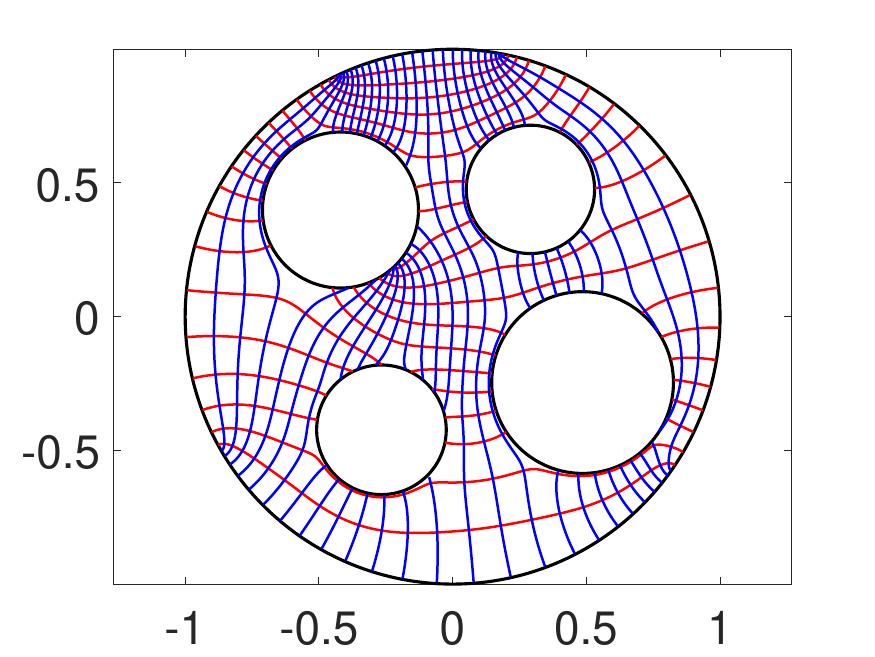}
    \caption{Circular domain $D$}
\end{subfigure}
\caption{Conformal map of the polygonal workspace of the Flagship example onto a circular domain.}
\label{fig:conf_reduction}
\end{figure}

\subsection{Complex potential and Generalized Root Locus}\label{subsec:potential}

Starts and goals are encoded by the rational function
\begin{equation}\label{eq:R}
R(z)=\frac{n(z)}{d(z)}=\frac{\prod_{i=1}^{N}(z-\zeta_i)}{\prod_{i=1}^{N}(z-p_i)},
\end{equation}
where $n$ and $d$ are monic, coprime and have simple roots. Let $\Sigma:=\{p_1,\dots,p_N,\zeta_1,\dots,\zeta_N\}$ and $j$ the imaginary unit. The complex potential is
\begin{equation}\label{eq:Omega}
\Omega(z):=-\log R(z)=\Phi(z)+j\Psi(z),\qquad z\notin\Sigma .
\end{equation}
The scalar potential $\Phi=-\log|R|$ is single valued and harmonic outside $\Sigma$. The stream function $\Psi=-\arg R$ is defined modulo $2\pi$; a holomorphic branch of $\log R$ exists on every simply connected subset of $\mathbb{C}\setminus\Sigma$ \cite{ahlfors1979}, and the gradient of $\Psi$ is single valued everywhere outside $\Sigma$. This field-theoretic reading of the root locus goes back to \cite{On_modeling_RL}.

Identifying $z=x+jy$ with $(x,y)^\top\in\mathbb{R}^2$, $\nabla$ denotes the gradient, written as a row vector, and $\Delta$ the Laplacian. By the Cauchy--Riemann relations,
\begin{equation}\label{eq:CR}
\Delta\Phi=\Delta\Psi=0,\qquad \nabla\Phi\,\nabla\Psi^\top=0,\qquad \|\nabla\Phi\|=\|\nabla\Psi\|=|\Omega'|,
\end{equation}
and $\nabla\Phi^\top$, read as a complex number, equals $\overline{\Omega'}$. Hence $\Phi$ plays the role of a scalar potential and $\Psi$ of the associated stream function. For a set $A$, $\overline A$ and $\partial A$ denote its closure and boundary; for a number or a function, the bar denotes complex conjugation.

For a phase $\theta\in[-\pi,\pi)$, the \emph{Generalized Root Locus} is
\begin{equation}\label{eq:GRL}
\mathrm{GRL}(\theta):=\{z\notin\Sigma:\ 1+ke^{j\theta}R(z)=0\ \text{for some } k>0\}.
\end{equation}
It extends the classical root locus \cite{Evans_RL,ogata2002modern}: $\mathrm{GRL}(0)$ and $\mathrm{GRL}(-\pi)$ are the positive and the negative root loci. For a fixed $\theta$ we call $\mathrm{GRL}(\theta)$ a \emph{branch family}, and its connected components \emph{branches}.

\begin{lemma}\label{lem:phase}
$z\in\mathrm{GRL}(\theta)$ if and only if $\Psi(z)\equiv\theta+\pi \pmod{2\pi}$, and then $k=e^{\Phi(z)}$.
\end{lemma}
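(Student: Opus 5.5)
The plan is to rewrite the root-locus condition $1+ke^{j\theta}R(z)=0$ in polar form and compare moduli and arguments separately, using $\Omega=\Phi+j\Psi=-\log R$ from \eqref{eq:Omega}. Fix $z\notin\Sigma$, so that $R(z)\neq0$ is finite and $R(z)=e^{-\Omega(z)}=e^{-\Phi(z)}e^{-j\Psi(z)}$. Here $\Psi(z)$ is any determination of $-\arg R(z)$; by the discussion after \eqref{eq:Omega} it is defined modulo $2\pi$, and the statement is itself modulo $2\pi$, so the choice of branch will not matter.

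\emph{Forward direction.} Suppose $1+ke^{j\theta}R(z)=0$ for some $k>0$. Then $ke^{j\theta}R(z)=-1=e^{j\pi}$, that is,
\[
k\,e^{-\Phi(z)}\,e^{j(\theta-\Psi(z))}=e^{j\pi}.
\]
Both sides are nonzero complex numbers written in polar form with positive modulus. Equating moduli gives $ke^{-\Phi(z)}=1$, hence $k=e^{\Phi(z)}$. Equating arguments modulo $2\pi$ gives $\theta-\Psi(z)\equiv\pi$, which is the same as $\Psi(z)\equiv\theta-\pi\equiv\theta+\pi \pmod{2\pi}$.

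\emph{Converse.} Suppose $\Psi(z)\equiv\theta+\pi\pmod{2\pi}$, and set $k:=e^{\Phi(z)}>0$. Then
\[
ke^{j\theta}R(z)=e^{\Phi(z)}e^{j\theta}e^{-\Phi(z)}e^{-j\Psi(z)}=e^{j(\theta-\Psi(z))}=e^{-j\pi}=-1,
\]
so $1+ke^{j\theta}R(z)=0$ and $z\in\mathrm{GRL}(\theta)$. The forward direction also shows that this $k$ is the unique gain that works, which justifies writing ``then $k=e^{\Phi(z)}$'' in the statement.

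No step is a real obstacle; the argument is a direct polar-form comparison. The only point needing care is the multivaluedness of $\Psi$, and it is harmless because every identity is used only through $e^{-j\Psi}$ or modulo $2\pi$. The restriction $z\notin\Sigma$ is what makes $R(z)$ finite and nonzero, so that $\Phi(z)$ is finite and the polar form of $R(z)$ is well defined.
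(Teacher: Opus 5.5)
Your proof is correct and follows the same route as the paper: rewrite $1+ke^{j\theta}R(z)=0$ as an identity between nonzero complex numbers and compare moduli and arguments modulo $2\pi$. You also write out the converse and the uniqueness of $k$ explicitly, where the paper compresses both directions into one equivalence.
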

\begin{proof}
$1+ke^{j\theta}R(z)=0$ is equivalent to $R(z)=-e^{-j\theta}/k$. Taking modulus and argument gives $e^{-\Phi(z)}=1/k$ and $-\Psi(z)\equiv\pi-\theta$.
\end{proof}

So the GRL is a level set of the stream function, and the gain is a function of the position, given by the scalar potential.

\subsection{Gradient-induced motion}\label{subsec:motion}

Each robot follows the gradient of the scalar potential,
\begin{equation}\label{eq:motion}
\dot z=\lambda(z)\nabla\Phi(z)^\top=-\lambda(z)\,\overline{R'(z)/R(z)},
\end{equation}
where $\lambda>0$ is a locally Lipschitz speed modulation. Since $\lambda>0$, it changes the time law but not the integral curves.

\begin{lemma}\label{lem:invariance}
For every $z_0\notin\Sigma$, \eqref{eq:motion} has a unique maximal solution with $z(0)=z_0$. Along it, $\Psi(z(t))$ is constant modulo $2\pi$, and
\begin{equation}\label{eq:phi_dot}
\tfrac{d}{dt}\Phi(z(t))=\lambda(z(t))\|\nabla\Phi(z(t))\|^2\ge0,
\end{equation}
with strict inequality where $\nabla\Phi\neq0$. Hence a solution starting on $\mathrm{GRL}(\theta)$ remains on $\mathrm{GRL}(\theta)$, with nondecreasing gain $k(t)=e^{\Phi(z(t))}$.
\end{lemma}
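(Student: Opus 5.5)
The argument has three parts: existence and uniqueness by Picard--Lindelöf, invariance of $\Psi$ by the Cauchy--Riemann relations \eqref{eq:CR}, and the monotonicity of $\Phi$ by the chain rule.

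\emph{Existence and uniqueness.} On the open set $\mathbb{C}\setminus\Sigma$ the function $R'/R=\sum_i (z-\zeta_i)^{-1}-\sum_i (z-p_i)^{-1}$ is holomorphic, hence real-analytic in $(x,y)$, and its conjugate is smooth as a map $\mathbb{R}^2\to\mathbb{R}^2$. Multiplying by the locally Lipschitz $\lambda$ gives a locally Lipschitz vector field on $\mathbb{C}\setminus\Sigma$. Picard--Lindelöf then gives a unique maximal solution through every $z_0\notin\Sigma$. I also need the identity $\nabla\Phi^\top=\overline{\Omega'}=-\overline{R'/R}$, which follows from $\Omega=-\log R$ and justifies the second expression in \eqref{eq:motion}.

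\emph{Invariance of $\Psi$.} The maximal solution lives on an interval $I$, and $z(I)$ avoids $\Sigma$. The stream function $\Psi$ is only defined modulo $2\pi$, but its gradient is single valued. So I will work locally. Around each $t_0\in I$, choose a small disk around $z(t_0)$ that misses $\Sigma$. On that disk a holomorphic branch of $\log R$ exists, so $\Psi$ is smooth there. By the chain rule and \eqref{eq:CR},
\[
\tfrac{d}{dt}\Psi(z(t))=\lambda\,\nabla\Psi\,\nabla\Phi^\top=0 .
\]
Hence $\Psi(z(t))$ is locally constant for this branch. Changing branches only adds a multiple of $2\pi$, so the class of $\Psi(z(t))$ modulo $2\pi$ is locally constant on the connected interval $I$, and therefore constant. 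This globalisation across branches of $\log R$ is the only mildly delicate step. An alternative that avoids it is to note that $\frac{d}{dt}\arg R(z(t))=\operatorname{Im}\bigl(\frac{R'}{R}\dot z\bigr)$, which gives a single-valued derivative, and then compute directly that it vanishes.

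\emph{Monotonicity of $\Phi$ and consequences.} The scalar potential $\Phi=-\log|R|$ is single valued and smooth, so
\[
\tfrac{d}{dt}\Phi(z(t))=\nabla\Phi\,\dot z=\lambda\|\nabla\Phi\|^2\ge0 ,
\]
with strict inequality wherever $\lambda>0$ and $\nabla\Phi\neq0$; this is \eqref{eq:phi_dot}. Finally, suppose the solution starts on $\mathrm{GRL}(\theta)$. By Lemma~\ref{lem:phase}, $\Psi(z_0)\equiv\theta+\pi \pmod{2\pi}$. The invariance just shown keeps this congruence along the whole solution, so Lemma~\ref{lem:phase} applies at every time and gives $z(t)\in\mathrm{GRL}(\theta)$ with gain $k(t)=e^{\Phi(z(t))}$. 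Since $\Phi(z(t))$ is nondecreasing and the exponential is increasing, $k(t)$ is nondecreasing.
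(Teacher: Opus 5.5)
Your proof is correct and follows the paper's argument: Picard--Lindel\"of for the locally Lipschitz field, the chain rule with the Cauchy--Riemann relations for $\Psi$ and $\Phi$, and then Lemma~\ref{lem:phase}. Your local-branch globalisation, or equivalently the single-valued derivative $\operatorname{Im}\bigl((R'/R)\dot z\bigr)$, makes explicit the modulo-$2\pi$ bookkeeping that the paper leaves implicit.
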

\begin{proof}
$\Phi$ is smooth outside $\Sigma$ and $\lambda$ is locally Lipschitz, so the vector field is locally Lipschitz and the maximal solution exists and is unique. By the chain rule and \eqref{eq:CR}, $\tfrac{d}{dt}\Psi=\lambda\nabla\Psi\nabla\Phi^\top=0$ and $\tfrac{d}{dt}\Phi=\lambda\|\nabla\Phi\|^2$. The last claim follows from Lemma~\ref{lem:phase}.
\end{proof}

Lemma~\ref{lem:invariance} is a local statement: it does not say where a trajectory goes, whether it can reach a break point where two branches meet, or whether it can leave the workspace.

\section{Break points and critical phases}\label{sec:breakpoints}

Distinct branches of the same family can meet only at break points, usually defined by a multiplicity condition \cite{Coll_free}; here they have a direct interpretation in terms of the potential.

\begin{definition}[Break point]\label{def:bp}
A point $b\notin\Sigma$ is a \emph{break point} if, for some $\theta$ and $k>0$, it is a root of multiplicity at least two of $1+ke^{j\theta}R(z)=0$.
\end{definition}

\begin{proposition}[Critical characterization]\label{prop:critical}
For $b\notin\Sigma$ the conditions $\nabla\Phi(b)=0$, $\nabla\Psi(b)=0$, $\Omega'(b)=0$ and $R'(b)=0$ are equivalent. Every break point satisfies them, and every point satisfying them is a break point of the family $\theta=\pi-\arg R(b)$.
\end{proposition}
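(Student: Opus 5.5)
The plan has three parts: chain the four conditions, then prove the two inclusions between critical points and break points.

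\emph{Chain of equivalences.} I would pass through the holomorphic derivative. By \eqref{eq:CR}, $\|\nabla\Phi\|=\|\nabla\Psi\|=|\Omega'|$, so $\nabla\Phi(b)=0$, $\nabla\Psi(b)=0$ and $\Omega'(b)=0$ are all equivalent. Away from $\Sigma$, on a simply connected neighbourhood of $b$ where a branch of $\log R$ is fixed, we have $\Omega'=-R'/R$. Since $b\notin\Sigma$, $R(b)$ is finite and nonzero. Hence $\Omega'(b)=0$ if and only if $R'(b)=0$. This part is routine; the only care needed is to note that $\Omega'$ is single valued even though $\Psi$ is not.

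\emph{Break points are critical.} Fix $\theta$ and $k>0$, and set $g(z):=d(z)+ke^{j\theta}n(z)$. For $z\notin\Sigma$, the equation $1+ke^{j\theta}R(z)=0$ is equivalent to $g(z)=0$. I would read ``multiplicity'' as the multiplicity of $b$ as a root of $g$, or equivalently as a zero of the meromorphic function $h(z):=1+ke^{j\theta}R(z)$, which is holomorphic near $b$ because $d(b)\neq0$; the two notions agree since $h=g/d$ with $d(b)\neq0$. A root of multiplicity at least two gives $h(b)=0$ and $h'(b)=ke^{j\theta}R'(b)=0$. Because $ke^{j\theta}\neq0$, this forces $R'(b)=0$.

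\emph{Critical points are break points.} Let $R'(b)=0$ with $b\notin\Sigma$. Choose $k:=1/|R(b)|>0$ and $\theta:=\pi-\arg R(b)$, reduced to $[-\pi,\pi)$. Then $ke^{j\theta}R(b)=e^{j\pi}=-1$, so $h(b)=0$. Also $h'(b)=ke^{j\theta}R'(b)=0$, so $b$ is a root of multiplicity at least two of the family $\theta$. This matches Lemma~\ref{lem:phase}, which gives $k=e^{\Phi(b)}$.

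The main (mild) obstacle is making the notion of multiplicity precise at points outside $\Sigma$, and this is handled by the factorization $h=g/d$ with $d(b)\ne0$. A second small point is ensuring $\theta$ lands in $[-\pi,\pi)$; this is just a reduction mod $2\pi$, since the GRL depends only on $e^{j\theta}$.
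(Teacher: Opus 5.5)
Your proposal is correct and follows the paper's proof step for step: the Cauchy--Riemann relations give the first three equivalences, $\Omega'=-R'/R$ with $R(b)\neq0$ gives the fourth, differentiating $1+ke^{j\theta}R$ at a multiple root shows that break points are critical, and the choice $k=1/|R(b)|$, $\theta=\pi-\arg R(b)$ gives the converse. Your added remarks on defining multiplicity through $h=g/d$ with $d(b)\neq0$, and on reducing $\theta$ to $[-\pi,\pi)$, are sound refinements that the paper leaves implicit.
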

\begin{proof}
The first three conditions are equivalent by \eqref{eq:CR}. Differentiating \eqref{eq:Omega} gives $\Omega'=-R'/R$, and $R(b)\neq0$, so $\Omega'(b)=0$ if and only if $R'(b)=0$. If $b$ is a multiple root of $1+ke^{j\theta}R$, differentiating at $b$ gives $ke^{j\theta}R'(b)=0$, hence $R'(b)=0$. Conversely, if $R'(b)=0$, take $\theta=\pi-\arg R(b)$ and $k=1/|R(b)|$: then $1+ke^{j\theta}R(b)=0$ and the derivative vanishes too, so $b$ is a multiple root.
\end{proof}

Break points are therefore the critical points of the complex potential: the stagnation points of the flow. In the obstacle-free case their number is bounded, as shown in \cite{Coll_free}; we restate the bound in the present notation.

\begin{proposition}[Break-point bound]\label{prop:bp_bound}
For $R$ as in \eqref{eq:R}, the number of break points is at most $2N-2$.
\end{proposition}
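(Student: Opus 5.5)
By Proposition~\ref{prop:critical}, the break points are exactly the points $b\notin\Sigma$ with $R'(b)=0$. The plan is therefore to count the zeros of $R'$ off $\Sigma$. Writing $R=n/d$, we have
\[
R'=\frac{n'd-nd'}{d^2},
\]
and since $d(b)\neq0$ for $b\notin\Sigma$, the break points are zeros of the polynomial $W:=n'd-nd'$ (the Wronskian of $n$ and $d$) that lie outside $\Sigma$.

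The first step is a degree count. Both $n$ and $d$ are monic of degree $N$, so $n'd$ and $nd'$ both have leading term $Nz^{2N-1}$. These cancel, and hence $\deg W\le 2N-2$. The second step is to exclude the trivial case $W\equiv0$. If $W$ vanished identically, then $R'\equiv0$ and $R$ would be constant; since $n$ and $d$ are monic, this forces $R\equiv1$, i.e.\ $n=d$. That contradicts coprimality, because $N\ge1$ gives $n$ and $d$ a common root. So $W$ is a nonzero polynomial of degree at most $2N-2$ and has at most $2N-2$ distinct zeros. In particular the set of break points is finite and has at most $2N-2$ elements.

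As a consistency check one can also look at the points of $\Sigma$. At a simple root $\zeta_i$ of $n$ we get $W(\zeta_i)=n'(\zeta_i)d(\zeta_i)\neq0$, using coprimality and simplicity of the root, and the same holds at the poles. So no zero of $W$ is lost to $\Sigma$, although the bound itself does not need this fact.

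The only delicate point is the cancellation of the leading coefficients, which is where the bound $2N-2$ rather than $2N-1$ comes from. It relies on $\deg n=\deg d$ with equal leading coefficients, which holds here because both polynomials are monic of degree $N$.
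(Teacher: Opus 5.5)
Your proposal is correct and follows the same route as the paper. Both arguments reduce break points to zeros of the polynomial $n'd-nd'$ via Proposition~\ref{prop:critical}, and both obtain $\deg\le 2N-2$ from the cancellation of the leading terms $Nz^{2N-1}$. Your explicit check that $W\not\equiv0$ (otherwise $R\equiv1$, which contradicts coprimality) covers all $N\ge1$ at once and fills in a step the paper leaves implicit for $N\ge2$; the paper verifies nonvanishing only in the separate $N=1$ case.
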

\begin{proof}
Let $q:=n'd-d'n$, so that $R'=q/d^2$. By Proposition~\ref{prop:critical}, every break point is a zero of $q$ outside $\Sigma$. For $N=1$, $q=\zeta_1-p_1\neq0$ and there is no break point. For $N\ge2$, $n'd$ and $d'n$ have degree $2N-1$ and the same leading coefficient $N$, because $n$ and $d$ are monic. Hence $\deg q\le2N-2$.
\end{proof}

With the image singularities of Section~\ref{sec:obstacles} the bound no longer applies, and the number of critical points in the domain is verified by the argument principle (Section~\ref{sec:procedure}). A phase is \emph{critical} if its family contains a break point. By Proposition~\ref{prop:critical}, the set of critical phases associated with a domain $D$ is
\begin{equation}\label{eq:Theta_b}
\Theta_b:=\{\pi-\arg R(b)\!\!\pmod{2\pi}:\ b\in D\setminus\Sigma,\ R'(b)=0\}.
\end{equation}
Each critical point gives one phase, and the critical points are finitely many in $\overline D$ when $R$ is meromorphic near $\overline D$, so $\Theta_b$ is finite. The critical points do not depend on $\theta$, so $\Theta_b$ is computed once.

\section{Certified coordinated GRL motion}\label{sec:main}

The main result is stated on a general bounded domain, so that the obstacle-free, the obstacle-compatible and the reciprocal constructions are covered at once.

\begin{assumption}\label{ass:domain}
$D\subset\mathbb{C}$ is a bounded domain whose boundary is a finite union of disjoint piecewise-smooth Jordan curves. $R$ is meromorphic on a neighbourhood of $\overline D$, and its zeros and poles in $\overline D$ are exactly $\zeta_1,\dots,\zeta_N$ and $p_1,\dots,p_N$, all simple and contained in $D$.
\end{assumption}

The assumption allows $R$ to have further zeros and poles outside $\overline D$, as the images of Section~\ref{sec:obstacles} do. It also fixes what happens at the boundary: $R$ is finite and nonzero on $\partial D$. Let
\begin{equation}\label{eq:F_kappa}
F_\theta:=e^{j\theta}R,\qquad \kappa(z):=e^{\Phi(z)}=\frac{1}{|R(z)|},\quad z\in D\setminus\Sigma .
\end{equation}
Then $\mathrm{GRL}(\theta)\cap D=\{z\in D\setminus\Sigma:F_\theta(z)<0\}$: the GRL is the set where $F_\theta$ takes negative real values. On $\mathrm{GRL}(\theta)$ one has $\kappa=-1/F_\theta$, so $\kappa(z)$ is the gain that places a closed-loop root at $z$. Besides the critical phases \eqref{eq:Theta_b}, the phases whose family touches the boundary are excluded. By Lemma~\ref{lem:phase} they are the \emph{boundary phases}
\begin{equation}\label{eq:Theta_d}
\Theta_\partial:=\{\Psi(z)-\pi\!\!\pmod{2\pi}:\ z\in\partial D\}.
\end{equation}
Since $\Psi$ varies continuously along $\partial D$, each boundary component contributes an arc of phases; it contributes a single phase exactly when it is a streamline. A phase $\theta\notin\Theta_b\cup\Theta_\partial$ is called \emph{admissible}. Each connected component of $\mathrm{GRL}(\theta)\cap D$ is called a \emph{branch}.

\begin{theorem}[Certified coordinated GRL motion]\label{thm:main}
Let Assumption~\ref{ass:domain} hold and let $\theta$ be admissible. Then:
\begin{enumerate}[leftmargin=*,itemsep=2pt,topsep=3pt]
\item \emph{(Branch structure.)} $\mathrm{GRL}(\theta)\cap D$ is the disjoint union of exactly $N$ smooth open arcs $B_1,\dots,B_N$. Each $B_i$ joins $p_i$ to $\zeta_{\sigma(i)}$ for a permutation $\sigma=\sigma_\theta$ of $\{1,\dots,N\}$, $\kappa$ maps $B_i$ diffeomorphically onto $(0,\infty)$, and $\overline{B_i}\cap\partial D=\emptyset$.
\item \emph{(Convergence.)} Let $\lambda_i>0$ be locally Lipschitz near $B_i$. The maximal solution of $\dot z_i=\lambda_i(z_i)\nabla\Phi(z_i)^\top$ with $z_i(0)\in B_i$ remains in $B_i$, $\kappa(z_i(t))$ is strictly increasing and unbounded, and $z_i(t)\to\zeta_{\sigma(i)}$.
\item \emph{(Safety.)} For $i\neq j$ and all $t,\tau$ in the respective intervals of existence, $z_i(t)\neq z_j(\tau)$ and $z_i(t)\notin\partial D$.
\item \emph{(Synchronous motion from the starts.)} Let $k:[0,T)\to[0,\infty)$ be $C^2$ with $\dot k>0$, with $k(0)=0$ and $k(t)\to\infty$ as $t\to T$, e.g. $k(t)=\tan(\pi t/2T)$. Then $z_i(t):=(\kappa|_{B_i})^{-1}(k(t))$, with $z_i(0)=p_i$, solves \eqref{eq:motion} on $(0,T)$ with $\lambda_i=\dot k/(\kappa\|\nabla\Phi\|^2)$, and $z_i(t)\to\zeta_{\sigma(i)}$ as $t\to T$. At every $t$ the robots occupy the $N$ roots in $D$ of $1+k(t)e^{j\theta}R(z)=0$: they leave the starts together and reach the goals together at $T$. For $k(t)=\tan(\pi t/2T)$ the speed $\|\dot z_i\|=\dot k/(\kappa\|\nabla\Phi\|)$ is continuous and positive on $[0,T]$, equal to $\tfrac{\pi}{2T}|\rho_i|$ at the start and to $\tfrac{\pi}{2T}/|R'(\zeta_{\sigma(i)})|$ at the goal, where $\rho_i$ is the residue of $R$ at $p_i$.
\end{enumerate}
\end{theorem}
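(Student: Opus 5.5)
The plan is to study the real-valued map $\kappa$ restricted to the level set $L:=\mathrm{GRL}(\theta)\cap D=\{z\in D\setminus\Sigma: F_\theta(z)<0\}$ and show that $L$ is a one-dimensional manifold on which $\kappa$ is a proper submersion onto $(0,\infty)$.

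\textbf{Local structure.} Since $\theta\notin\Theta_b$, Proposition~\ref{prop:critical} gives $R'\neq0$ on $L$. Hence $F_\theta$ is a local biholomorphism there, and $L$ is locally the preimage of the negative real axis, i.e.\ a smooth embedded curve. Along $L$, $\kappa=-1/F_\theta$ has nonvanishing derivative, so $\kappa$ is a local diffeomorphism of $L$ into $(0,\infty)$.

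\textbf{Properness.} Next I show that $\kappa|_L:L\to(0,\infty)$ is proper. Take a sequence in $L$ with $\kappa$ in a compact subinterval $[a,b]$. Its accumulation points in $\overline D$ are constrained as follows.
\begin{itemize}[leftmargin=*,itemsep=2pt,topsep=3pt]
\item They cannot lie on $\partial D$. There $R$ is finite and nonzero, so by continuity of $\Psi$ the limit $w$ would satisfy $\Psi(w)\equiv\theta+\pi$, which means $\theta\in\Theta_\partial$.
\item They cannot be poles or zeros, since $\kappa\to0$ or $\kappa\to\infty$ there.
\end{itemize}
So accumulation points lie in $L$ itself, and properness follows. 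A proper local diffeomorphism onto a connected target is a covering map. Each component of $L$ is a connected 1-manifold covering $(0,\infty)$, hence mapped diffeomorphically onto it; a circle component is impossible because its image would be compact.

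\textbf{Counting components.} For each $k>0$ I count the roots of $1+ke^{j\theta}R$ in $D$, i.e.\ the zeros of $n+ke^{j\theta}d$ after clearing denominators. The cleanest route is the argument principle: $1+kF_\theta$ has no zeros on $\partial D$ (since $\theta\notin\Theta_\partial$) and exactly $N$ poles in $D$. Its number of zeros in $D$ is therefore $N$ plus the winding number of $1+kF_\theta$ along $\partial D$.

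This winding number is continuous in $k$, integer-valued, and never passes through zero, so it is constant. As $k\to0$ it tends to the winding of $1$, which is $0$. Hence there are exactly $N$ roots, all simple because $R'\neq0$ on $L$. So each fibre $\kappa^{-1}(k)\cap L$ has $N$ points, and $L$ has exactly $N$ components $B_1,\dots,B_N$.

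\textbf{Endpoints and the permutation.} Near a simple pole $p_i$, $F_\theta\sim e^{j\theta}\rho_i/(z-p_i)$, so exactly one arc leaves $p_i$, corresponding to small $\kappa$. Near a zero $\zeta_m$ exactly one arc enters, corresponding to large $\kappa$. By properness, each $B_i$'s $\kappa\to0$ end converges to a pole and its $\kappa\to\infty$ end to a zero. Counting ($N$ arcs, $N$ poles, one arc per pole and per zero) yields a bijection $\sigma$. Finally $\overline{B_i}\cap\partial D=\emptyset$ by the properness argument.

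\textbf{Items 2 and 3.} These follow from Lemma~\ref{lem:invariance}. The solution stays on the level set and hence in the component $B_i$, since $\nabla\Phi\neq0$ there and $\kappa$ is strictly increasing. Unboundedness comes from a standard argument: if $\kappa$ were bounded, the trajectory would stay in a compact subarc where the vector field is bounded away from zero, so it could be continued, contradicting maximality. Convergence to $\zeta_{\sigma(i)}$ then comes from the endpoint analysis above. Safety follows from disjointness of the $B_i$ and from $\overline{B_i}\cap\partial D=\emptyset$.

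\textbf{Item 4.} Define $z_i=(\kappa|_{B_i})^{-1}\circ k$. Since $\dot z_i$ is tangent to $B_i$ and $\tfrac{d}{dt}\kappa=\kappa\nabla\Phi\,\dot z_i$, matching gives the stated $\lambda_i$. The speeds come from expansions at the endpoints.
\begin{itemize}[leftmargin=*,itemsep=2pt,topsep=3pt]
\item Near $p_i$: $\kappa\approx|z-p_i|/|\rho_i|$, so $|z-p_i|\approx|\rho_i|k$ and the speed at $t=0$ is $\dot k(0)|\rho_i|=\tfrac{\pi}{2T}|\rho_i|$.
\item Near the goal: $1/\kappa\approx|R'(\zeta)||z-\zeta|$. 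With $1/k=\cot(\pi t/2T)$, whose derivative at $T$ has modulus $\pi/2T$, this gives speed $\tfrac{\pi}{2T}/|R'(\zeta_{\sigma(i)})|$.
\end{itemize}

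\textbf{Main obstacle.} I expect the delicate step to be properness and the constancy of the root count under boundary contact. This is where both exclusions ($\Theta_b$ and $\Theta_\partial$) are genuinely used, and where possible extra poles and zeros of $R$ outside $\overline D$ must be kept out of the count.
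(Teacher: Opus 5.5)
Your argument is correct, but it obtains the count by a different route from the paper.

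The paper works one component at a time. Regularity comes from $\theta\notin\Theta_b$. Closed components are excluded because $\kappa$ would attain a maximum. A cluster-point analysis at $\kappa_\pm$, with a connectedness argument for the cluster set, shows that each component runs from a single pole to a single zero. Finally, both the number $N$ and the bijection $\sigma$ come from the local fact that exactly one arc ends at each simple pole and at each simple zero.

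Your properness of $\kappa|_L$ over compact subintervals is the same cluster-point analysis, repackaged so that the covering argument gives the arc structure at once. The genuinely new ingredient is the count. The homotopy $k\mapsto 1+kF_\theta$ on $\partial D$ avoids $0$ for all $k\ge0$ exactly because $\theta\notin\Theta_\partial$, so the argument principle yields $N$ closed-loop roots in $D$ for every $k>0$. This is the classical root-locus conservation argument. It separates the role of $\Theta_\partial$ (no root enters or leaves $D$) from that of $\Theta_b$ (only simplicity of the roots), and it gives directly the claim in item~4 that the robots are all the roots in $D$ at each gain.

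The paper's route is more elementary: it uses only the inverse function theorem and compactness, with no contour integral over a multiply connected piecewise-smooth boundary. It also produces $\sigma$ together with the count, whereas you still need the local analysis at poles and zeros to define $\sigma$.

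Note that ``by properness'' alone does not show that the $\kappa\to0$ end of a component converges to a \emph{single} pole. Your fibre count closes this immediately. The unique half-arc at each pole carries one point of every small fibre, and each component meets each fibre exactly once. Hence the $N$ pole half-arcs lie in $N$ distinct components, which exhausts them; the same holds at the zeros.

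Two smaller points:
\begin{itemize}
\item In item~2, the case $T_{\max}=\infty$ needs $\dot\kappa\ge c>0$ on the compact subarc, not an appeal to maximality.
\item In item~4, the endpoint speeds are best justified by writing $z_i(t)=w^{-1}(-k(t))$ near $p_i$, where $w^{-1}$ is the local inverse of $w=1/F_\theta$, and $z_i(t)=F_\theta^{-1}(-1/k(t))$ near the goal, with the local inverse of $F_\theta$. This gives $C^1$ regularity up to $t=0$ and $t=T$ for $k=\tan(\pi t/2T)$. The estimate $|z-p_i|\approx|\rho_i|k$ only controls $|z_i(t)-p_i|/t$.
\end{itemize}
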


\emph{Strategy of the proof.} The level set of $\Psi$ followed by a robot could a priori reach the boundary, pass through a break point, or lead two robots to the same goal. Admissibility rules these out one at a time: $\theta\notin\Theta_b$ makes every branch a regular arc (Step~1); compactness of $\overline D$ and $\theta\notin\Theta_\partial$ force every branch to end at a zero (Step~2); and the local inversion of $F_\theta$ at simple poles and zeros, where exactly one arc starts or ends, makes the pairing a bijection (Step~3). Step~4 adds the dynamics.

\begin{proof}
\emph{Step 1 (local structure).} Let $z^\ast\in\mathrm{GRL}(\theta)\cap D$. Since $\theta\notin\Theta_b$, $z^\ast$ is not a critical point of $R$ (Proposition~\ref{prop:critical}), so $F_\theta'(z^\ast)=e^{j\theta}R'(z^\ast)\neq0$. By the holomorphic inverse function theorem, $F_\theta$ maps a neighbourhood $U$ of $z^\ast$ biholomorphically onto a neighbourhood $V$ of $F_\theta(z^\ast)<0$. With $G:V\to U$ the local inverse, $\mathrm{GRL}(\theta)\cap U=G(V\cap\mathbb{R}_{<0})$ is the image of a segment under an injective map with $G'=1/F_\theta'\neq0$, that is, a single smooth arc. Along it $\kappa=-1/F_\theta$ is a smooth coordinate, hence strictly monotone. Thus $\mathrm{GRL}(\theta)\cap D$ is an embedded one-dimensional submanifold of $D\setminus\Sigma$: distinct components never meet, and $\kappa$ has no critical point on any of them. A connected one-dimensional manifold is diffeomorphic to a circle or to an open interval. A circle is excluded, because the continuous function $\kappa$ would attain a maximum on it, at a critical point. Hence every component $B$ is an open arc on which $\kappa$ is strictly monotone, and $\kappa(B)=(\kappa_-,\kappa_+)$ with $0\le\kappa_-<\kappa_+\le\infty$.

\emph{Step 2 (ends).} Let $z_n\in B$ with $\kappa(z_n)\to\kappa_+$. Since $\overline D$ is compact, up to a subsequence $z_n\to z^\ast\in\overline D$. Three cases are possible. If $z^\ast\in\partial D$, then $R(z^\ast)$ is finite and nonzero by Assumption~\ref{ass:domain}, and $F_\theta(z^\ast)=\lim F_\theta(z_n)$ is a negative real number, so $z^\ast$ is a boundary point of $\mathrm{GRL}(\theta)$ and $\theta\in\Theta_\partial$, a contradiction. If $z^\ast\in D\setminus\Sigma$, then $\kappa_+=\kappa(z^\ast)<\infty$ and $z^\ast\in\mathrm{GRL}(\theta)$. By Step~1 the arc through $z^\ast$ is the only part of $\mathrm{GRL}(\theta)$ near $z^\ast$, so it contains $z_n$ for large $n$ and is a subset of $B$. Then $\kappa_+=\kappa(z^\ast)$ is attained inside the open interval $(\kappa_-,\kappa_+)$, a contradiction. If $z^\ast$ is a pole, then $\kappa(z_n)=1/|R(z_n)|\to0\neq\kappa_+$. Therefore $\kappa_+=\infty$, and every cluster point of $B$ as $\kappa\to\infty$ is a zero of $R$. The set of these cluster points is connected (it is the intersection of a decreasing family of compact connected sets) and contained in the finite set $\{\zeta_\ell\}$, so it is a single zero. In the same way, $\kappa_-=0$ and $B$ starts at a single pole. In particular $\overline B=B\cup\{p,\zeta\}$ for a pole $p$ and a zero $\zeta$, and $\overline B\cap\partial D=\emptyset$.

\emph{Step 3 (counting).} Near a simple pole $p_i$, $w:=1/F_\theta$ is holomorphic with $w(p_i)=0$ and $w'(p_i)\neq0$. Hence $w$ is a local biholomorphism at $p_i$, and $\mathrm{GRL}(\theta)$ near $p_i$ is $w^{-1}(\mathbb{R}_{<0})$, the image of a single half-line: exactly one arc ends at $p_i$. The same holds at every simple zero, with $F_\theta$ in place of $w$. By Step~2 every component has exactly one pole end and one zero end. Each pole is the end of exactly one component and each zero is the end of exactly one component, so there are exactly $N$ components and the map $\sigma$ sending the pole end to the zero end is a bijection. This proves item~1.

\emph{Step 4 (dynamics).} On $\mathrm{GRL}(\theta)\cap D$, $\nabla\Phi^\top=\overline{\Omega'}\neq0$ by Step~1, so the vector field of item~2 is locally Lipschitz near $B_i$. By Lemma~\ref{lem:invariance} the solution stays in $\mathrm{GRL}(\theta)$; being continuous and starting in $B_i$, it stays in the component $B_i$. Along it $\tfrac{d}{dt}\kappa=\kappa\,\tfrac{d}{dt}\Phi=\kappa\lambda_i\|\nabla\Phi\|^2>0$. Using $\kappa$ as a coordinate on $B_i\cong(0,\infty)$, the motion reduces to a scalar equation $\dot\kappa=g_i(\kappa)$ with $g_i>0$ locally Lipschitz. If $\kappa$ stayed bounded on the maximal interval $[0,T_{\max})$, $z_i(t)$ would remain in a compact subarc of $B_i$ on which $g_i\ge c>0$: for $T_{\max}=\infty$ this gives $\kappa\ge\kappa(0)+ct\to\infty$, and for $T_{\max}<\infty$ the solution could be continued beyond $T_{\max}$ from a regular point of $B_i$; both are impossible. Hence $\kappa\to\infty$ and, by item~1, $z_i\to\zeta_{\sigma(i)}$. This proves item~2. Item~3 follows from the disjointness of the $B_i$ and from $\overline{B_i}\cap\partial D=\emptyset$.

For item~4, by item~1 the point $z_i(t)=(\kappa|_{B_i})^{-1}(k(t))$ is well defined and $C^1$ on $(0,T)$, and it tends to $p_i$ as $k\to0$ and to $\zeta_{\sigma(i)}$ as $k\to\infty$. Along \eqref{eq:motion}, $\tfrac{d}{dt}\kappa=\kappa\lambda_i\|\nabla\Phi\|^2$, which equals $\dot k$ for the stated $\lambda_i$; since the trajectory is tangent to $B_i$ and $\kappa$ is a coordinate on $B_i$, $z_i(t)$ is the solution of \eqref{eq:motion}. All robots share the gain $k(t)$, and each $B_i$ contains exactly one point with $\kappa=k(t)$. Since $\mathrm{GRL}(\theta)\cap D=\bigcup_iB_i$, by Lemma~\ref{lem:phase} these $N$ points are all the roots in $D$ of $1+k(t)e^{j\theta}R=0$. The speed is $\lambda_i\|\nabla\Phi\|=\dot k/(\kappa\|\nabla\Phi\|)$. Near $p_i$, $R(z)\simeq\rho_i/(z-p_i)$, so $\kappa\simeq|z-p_i|/|\rho_i|$ and $\|\nabla\Phi\|=|R'/R|\simeq1/|z-p_i|$, hence $\kappa\|\nabla\Phi\|\to1/|\rho_i|$. Near $\zeta=\zeta_{\sigma(i)}$, $R(z)\simeq R'(\zeta)(z-\zeta)$, so $\kappa\simeq1/(|R'(\zeta)||z-\zeta|)$ and $\|\nabla\Phi\|\simeq1/|z-\zeta|$, hence $\kappa\|\nabla\Phi\|\simeq|R'(\zeta)|\kappa^2$. With $\dot k=\tfrac{\pi}{2T}(1+k^2)$ this gives the two limits, and in between the speed is continuous and positive because $\nabla\Phi\neq0$ on $B_i$.
\end{proof}

Theorem~\ref{thm:main} turns the invariance of Lemma~\ref{lem:invariance} into a complete multi-robot statement. Each robot reaches a goal, the goals are distinct, the assignment $\sigma_\theta$ is produced rather than prescribed, no two paths meet, and $\partial D$ is never reached. Safety in item~3 holds for all pairs of times $t,\tau$, not only for equal times: the paths themselves are disjoint, so any speed modulation, chosen independently for each robot, is collision free. The distances between paths and from $\partial D$ are positive but not prescribed; if needed, they can be improved by the choice of the phase. Item~4 gives the root-locus reading of the motion: the robots are the closed-loop poles of $1+ke^{j\theta}R$, and the time law is a reparametrization of the gain.

The direction in which $B_i$ leaves $p_i$ follows from $R(z)\simeq\rho_i/(z-p_i)$ and the phase condition:
\begin{equation}\label{eq:departure}
\arg(z-p_i)\to\pi+\theta+\arg\rho_i\quad\text{along }B_i .
\end{equation}
This is the departure angle of the root locus, used to start the numerical tracing.

\begin{remark}[Assignment and phase]\label{rem:sigma}
The assignment $\sigma_\theta$ is constant on every interval of admissible phases. Indeed, on such an interval the branches depend continuously on $\theta$, while their end points are fixed and the branches cannot touch each other or the boundary. The assignment can therefore change only across $\Theta_b\cup\Theta_\partial$. Choosing the phase is thus choosing one of finitely many assignments, together with the shape of the paths that realize it.
\end{remark}

\begin{remark}[Obstacle-free plane]\label{rem:plane}
Without obstacles the workspace is unbounded, but Theorem~\ref{thm:main} applies to any disk $D_\rho=\{|z|<\rho\}$ containing $\Sigma$. Since $N$ zeros and $N$ poles give $R(z)\to1$ as $z\to\infty$, $\Psi\to0$ uniformly on $|z|=\rho$, and $\Theta_\partial$ shrinks to the single phase $-\pi$ as $\rho\to\infty$. This is the familiar fact that only branches of the negative root locus can escape to infinity. Every phase $\theta\notin\Theta_b\cup\{-\pi\}$ is admissible on a large enough disk.
\end{remark}

\begin{wrapfigure}{r}{0.4\textwidth}
\vspace{-12pt}
\centering
\includegraphics[width=0.4\textwidth]{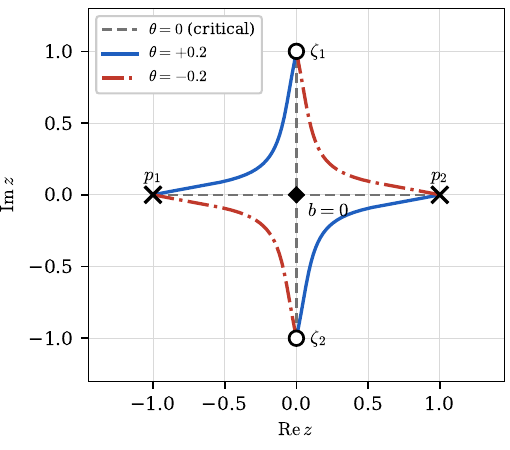}
\caption{Two-robot example: GRL branches for $\theta=0.2$ (solid) and $\theta=-0.2$ (dash-dotted), and the critical family $\theta=0$ (dashed) through the break point $b=0$. The assignment is exchanged across the critical phase. Crosses: starts; circles: goals.}
\label{fig:two_robots}
\vspace{-10pt}
\end{wrapfigure}
\begin{example}[Two robots]\label{ex:two}
Let $p_{1,2}=\mp1$ and $\zeta_{1,2}=\pm j$, so that $R(z)=(z^2+1)/(z^2-1)$ and $R'(z)=-4z/(z^2-1)^2$. The only critical point is $b=0$, with $R(0)=-1$, hence $\Theta_b=\{0\}$: the classical root locus is the only critical family. For $\theta=0$ the closed-loop roots satisfy $z^2=(1-k)/(1+k)$, so both robots reach the origin at $k=1$ and the assignment is undefined. For $\theta\neq0$, $z^2=(1-ke^{j\theta})/(1+ke^{j\theta})$ never vanishes, the two branches are disjoint, and $\sigma_\theta$ is the identity for $\theta>0$ and the transposition for $\theta<0$ (Fig.~\ref{fig:two_robots}). The residues are $\rho_{1,2}=\mp1$ and $|R'(\pm j)|=1$, so with $k(t)=\tan(\pi t/2T)$ both robots start and arrive with speed $\pi/2T$.
\end{example}

\section{Obstacle-compatible potentials and certificate}\label{sec:obstacles}

In the circular domain
\begin{equation}\label{eq:circular_domain}
D=\{|z-c_0|<r_0\}\setminus\bigcup_{i=1}^{N_{\mathrm{obs}}-1}\{|z-c_i|\le r_i\},
\end{equation}
the potential \eqref{eq:Omega} does not respect the obstacles: $\Psi$ is not constant on the circles, so each circle contributes to $\Theta_\partial$ a wide arc of phases, and few admissible phases may remain.

\subsection{Boundary conditions}\label{subsec:bc}

Let $n_i$ and $\tau_i$ denote the unit normal and tangent to $C_i$, $i=0,\dots,N_{\mathrm{obs}}-1$. We look for a potential $\Omega_{\mathrm{obs}}=\Phi_{\mathrm{obs}}+j\Psi_{\mathrm{obs}}$ with the same singularities as $\Omega$ in $D$ and
\begin{equation}\label{eq:neumann}
\Delta\Phi_{\mathrm{obs}}=0\ \text{in }D\setminus\Sigma,\qquad \frac{\partial\Phi_{\mathrm{obs}}}{\partial n_i}=0\ \text{on }C_i .
\end{equation}
The first condition keeps the harmonic structure; the second asks that the field have no normal component on the obstacles. By the Cauchy--Riemann relations,
\begin{equation}\label{eq:tangential}
\frac{\partial\Psi_{\mathrm{obs}}}{\partial\tau_i}=\frac{\partial\Phi_{\mathrm{obs}}}{\partial n_i}\quad\text{on }C_i,
\end{equation}
so the homogeneous Neumann condition is equivalent to $\Psi_{\mathrm{obs}}$ being constant on each $C_i$: every boundary circle is a streamline \cite{milne-thomson1952theoretical,CROWDY2_analytic_multi_circles,DELILLO_pot_flow_series}. In terms of phases, each circle then contributes a single phase to $\Theta_\partial$. The condition gives non-penetration only; clearance margins are introduced beforehand by inflating the obstacles (Section~\ref{subsec:workspace}).

\subsection{One circle: the Milne--Thomson correction}\label{subsec:mt}

Let $C=\{|z-c|=r\}$ be one of the circles, and let
\begin{equation}\label{eq:reflection}
\rho_C(z):=c+\frac{r^2}{\overline{z-c}}
\end{equation}
be the reflection in $C$. It fixes $C$ pointwise, exchanges the inside and the outside of the circle, and is an involution.

\begin{theorem}[Milne--Thomson circle theorem \cite{milne-thomson1952theoretical}]\label{thm:mt}
Let $\Omega$ be holomorphic in the flow region, up to its singularities, and free of singularities on the other side of $C$. Then
\begin{equation}\label{eq:mt}
\Omega_C(z):=\Omega(z)+\overline{\Omega(\rho_C(z))}
\end{equation}
has the same singularities as $\Omega$ in the flow region, and $C$ is a streamline of $\Omega_C$.
\end{theorem}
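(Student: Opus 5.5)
The plan is to check the two claims separately: that $\Omega_C$ is holomorphic with the right singularities in the flow region, and that $\operatorname{Im}\Omega_C$ is constant on $C$. The first step is to see that $z\mapsto\overline{\Omega(\rho_C(z))}$ is holomorphic wherever it is defined and nonsingular. Since $\rho_C(z)=c+r^2/(\bar z-\bar c)$ depends only on $\bar z$, it is antiholomorphic. We write $\Omega(w)=\sum$ locally in powers of $(w-w_0)$, or simply use the chain rule with Wirtinger derivatives: $\partial_{\bar z}\overline{\Omega(\rho_C(z))}=\overline{\partial_z(\Omega\circ\rho_C)}=\overline{\Omega'(\rho_C)\,\partial_z\rho_C}=0$, because $\partial_z\rho_C=0$. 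Equivalently, $\overline{\Omega(\rho_C(z))}=\overline{\Omega}\bigl(c\text{-conj}\bigr)$, namely the function $\overline{\Omega(\bar w)}$ evaluated at $w=\bar c+r^2/(z-c)$, and this is a composition of holomorphic maps.

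The second step is to locate the singularities of the added term. For $z$ in the flow region, $\rho_C(z)$ lies on the other side of $C$, since the reflection exchanges inside and outside. By hypothesis $\Omega$ is free of singularities there, so the added term is holomorphic throughout the flow region. Consequently $\Omega_C$ differs from $\Omega$ by a holomorphic function there, and it has exactly the same singularities. One case needs care: the point at infinity, or the point $c$, when the flow region is the outside of $C$. Here $\rho_C(z)\to c$ as $z\to\infty$. This is harmless when $c$ lies on the non-flow side where $\Omega$ is regular, so the correction stays bounded. The same reasoning applies symmetrically in the case of an interior flow region, where $z\to c$ inside the non-flow disk is irrelevant because $c$ lies outside the flow region.

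The third step is the streamline property. For $z\in C$ we have $\rho_C(z)=z$, since $|z-c|^2=r^2$ gives $r^2/\overline{(z-c)}=z-c$. Hence $\Omega_C(z)=\Omega(z)+\overline{\Omega(z)}=2\operatorname{Re}\Omega(z)$ is real, so $\Psi_C=\operatorname{Im}\Omega_C\equiv0$ on $C$. If $\Omega$ involves logarithms, as $-\log R$ does, then $\Psi$ is only defined modulo $2\pi$. In that case I would argue at the level of derivatives: $\Omega_C'$ is single-valued, and the tangential derivative $\partial_\tau\Psi_C=\operatorname{Im}(\Omega_C'(z)\tau)$ vanishes on $C$. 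This follows by differentiating the identity $\operatorname{Im}\Omega_C=\text{const}$ locally along $C$, using a local branch on each arc. So $\Psi_C$ is locally constant on $C$, which by \eqref{eq:tangential} is the meaning of ``streamline'' here.

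I expect the main obstacle to be this multivaluedness together with the boundary point at infinity or at $c$. Concretely, one has to check that branch choices of $\log$ in the two terms can be made compatibly on a neighbourhood of each arc of $C$. To handle this, I would try to work with $\Omega_C'$ and $R_C:=R\cdot\overline{R\circ\rho_C}$, with $\Omega_C=-\log R_C$. On $C$, $R_C=|R|^2>0$, which gives the streamline property directly without any branch bookkeeping.
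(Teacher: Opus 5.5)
Your proof is correct and follows the paper's argument. The added term is holomorphic because the antiholomorphic reflection $\rho_C$ is followed by conjugation. It is regular in the flow region because $\rho_C$ sends that region to the other side of $C$. Finally, $\rho_C(z)=z$ on $C$ makes $\Omega_C=2\Re\Omega$ real there. Your device $R_C=R\cdot\overline{R\circ\rho_C}$, with $R_C=|R|^2>0$ on $C$, is a clean way to handle the $2\pi$ ambiguity of $\Psi$, which the paper's proof passes over. It is the same observation as in Proposition~\ref{prop:rrl}.

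One side remark in your second step is wrong and should be corrected. For an interior flow region, as for $C_0$, the centre $c$ generally \emph{does} lie in the flow region (for instance $c=0\in D$ in Example~\ref{ex:disk}). There $\rho_C(z)\to\infty$ as $z\to c$, so this point is not irrelevant. What makes it harmless is that $\infty$ belongs to the other side of $C$, so $\Omega$ must be regular there. Then the added term is bounded near $c$ and the singularity is removable. For $\Omega=-\log R$ this regularity is exactly the balance $\deg n=\deg d$. Without it, $\overline{\Omega(\rho_C(z))}$ contains the term $(\deg n-\deg d)\log(z-c)$, which is the term the paper says cancels in the balanced case.
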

\begin{proof}
For $z$ in the flow region, $\rho_C(z)$ lies on the other side of $C$, where $\Omega$ has no singularities. The composition of a holomorphic function with the antiholomorphic map $\rho_C$, followed by conjugation, is holomorphic, so the added term is holomorphic in the flow region and no singularity is created there. If $z\in C$, then $\rho_C(z)=z$ and $\Omega_C(z)=\Omega(z)+\overline{\Omega(z)}=2\Re\Omega(z)$ is real. Hence $\Im\Omega_C$ is constant on $C$.
\end{proof}

In the logarithmic setting, $\overline{\Omega(\rho_C(z))}$ adds an image pole at $\rho_C(p)$ for each pole $p$ and an image zero for each zero, plus terms $\pm\log(z-c)$ that cancel in the balanced case. So
\begin{equation}\label{eq:mt_rational}
R_C(z)=K\,R(z)\,\frac{\prod_{i}\bigl(z-\rho_C(\zeta_i)\bigr)}{\prod_{i}\bigl(z-\rho_C(p_i)\bigr)},\qquad K\neq0,
\end{equation}
is again rational and balanced, with image singularities inside the obstacle only. A positive factor in $K$ only rescales the gain.

\begin{example}[One robot in a disk]\label{ex:disk}
For $p=-1$, $\zeta=1$ and $D=\{|z|<2\}$, $\arg R(z)$ is the angle under which the segment $[-1,1]$ is seen from $z$, so each $\mathrm{GRL}(\theta)$ is a circular arc from $p$ to $\zeta$. With $\Omega=-\log R$, the arcs with $|\theta|\ge\pi-2\arctan\tfrac12\approx126.9^\circ$ leave the disk, and $\Theta_\partial=\{|\theta|\ge126.9^\circ\}$. One reflection in $C_0=\{|z|=2\}$ adds the images $\rho_{C_0}(\pm1)=\pm4$, makes $C_0$ a streamline, and reduces $\Theta_\partial$ to the single phase $-\pi$.
\end{example}

\subsection{Several circles: successive reflections}\label{subsec:reflections}

With several circles, the images created to correct one circle perturb the others, and the correction is iterated. Let $P^{(0)}=\{p_i\}$ and $Z^{(0)}=\{\zeta_i\}$. The first reflected sets are $P_i^{(1)}:=\rho_{C_i}(P^{(0)})$ and $Z_i^{(1)}:=\rho_{C_i}(Z^{(0)})$, $i=0,\dots,N_{\mathrm{obs}}-1$. For $s\ge2$,
\begin{equation}\label{eq:recursive}
P_i^{(s)}:=\bigcup_{\ell\neq i}\rho_{C_i}\bigl(P_\ell^{(s-1)}\bigr),\qquad Z_i^{(s)}:=\bigcup_{\ell\neq i}\rho_{C_i}\bigl(Z_\ell^{(s-1)}\bigr):
\end{equation}
the images generated on $C_\ell$ at depth $s-1$ are reflected in every other circle. Reflecting again in the same circle would only undo the previous step. This is the standard method of images for multiply connected circular domains \cite{CROWDY2_analytic_multi_circles,DELILLO_comp_rev,DELILLO_pot_flow_series}. By induction, the number of image singularities up to depth $m$ is
\begin{equation}\label{eq:NI}
N_I^{(m)}=2NN_{\mathrm{obs}}\textstyle\sum_{s=0}^{m-1}(N_{\mathrm{obs}}-1)^s ,
\end{equation}
and the truncated function is
\begin{equation}\label{eq:R_obs}
R^{(m)}_{\mathrm{obs}}(z)=K_m\,R(z)\prod_{s=1}^{m}\prod_{i=0}^{N_{\mathrm{obs}}-1}\frac{\prod_{\zeta\in Z_i^{(s)}}(z-\zeta)}{\prod_{p\in P_i^{(s)}}(z-p)},\qquad \Omega^{(m)}_{\mathrm{obs}}=-\log R^{(m)}_{\mathrm{obs}} .
\end{equation}
Each image lies inside an obstacle disk or outside $C_0$, because a reflection in $C_i$ sends the outside of the disk bounded by $C_i$ into it (and, for $C_0$, the inside of $C_0$ outside it), and every reflected point was outside the reflecting circle. Hence $R^{(m)}_{\mathrm{obs}}$ is rational, has exactly the original zeros and poles in $\overline D$, and all images outside $\overline D$. \emph{It satisfies Assumption~\ref{ass:domain}, and Theorem~\ref{thm:main} applies exactly to the potential that is computed, not to its limit for $m\to\infty$.} Figure~\ref{fig:flagship}(a) shows the resulting field for the Flagship example.

The number of images grows geometrically with $m$. Convergence of the infinite series is known under separation conditions on the circles \cite{DELILLO_pot_flow_series,DELILLO_comp_rev}; we do not rely on it, and certify a finite truncation directly.

\subsection{Residual and certificate}\label{subsec:certificate}

For finite $m$, $\Psi^{(m)}_{\mathrm{obs}}$ is only approximately constant on $C_i$. The residual
\begin{equation}\label{eq:residual}
\bar\varepsilon_i:=\max_{z\in C_i}\left|\frac{\partial\Phi^{(m)}_{\mathrm{obs}}}{\partial n_i}(z)\right|
\end{equation}
is the largest normal component of the field $\nabla\Phi^{(m)}_{\mathrm{obs}}$ on $C_i$, that is, how much the computed field still enters the obstacle. By \eqref{eq:tangential} it is also the largest slope of $\Psi^{(m)}_{\mathrm{obs}}$ along $C_i$, so it controls how many phases each circle removes.

\begin{proposition}[Residual bound and certificate]\label{prop:certificate}
Let $R=R^{(m)}_{\mathrm{obs}}$ and $\Psi=\Psi^{(m)}_{\mathrm{obs}}$. Then:
\begin{enumerate}[leftmargin=*,itemsep=2pt,topsep=3pt]
\item The contribution of $C_i$ to $\Theta_\partial$ lies in an arc of length at most $\pi r_i\bar\varepsilon_i$. Hence $\Theta_\partial$ has total length at most $\pi\sum_ir_i\bar\varepsilon_i$, which vanishes with the residual.
\item Let $z_{i,1},\dots,z_{i,M_i}$ be $M_i$ equispaced points on $C_i$, $h_i=2\pi r_i/M_i$, and
\begin{equation}\label{eq:Li}
L_i:=\sum_{\xi}\frac{1}{\mathrm{dist}(\xi,C_i)},
\end{equation}
where $\xi$ ranges over all zeros and poles of $R$, images included. If, for every $i$,
\begin{equation}\label{eq:certificate}
\min_{\ell}\ d_{2\pi}\bigl(\Psi(z_{i,\ell}),\theta+\pi\bigr)>L_ih_i/2,
\end{equation}
where $d_{2\pi}$ is the distance modulo $2\pi$, then $\theta\notin\Theta_\partial$.
\end{enumerate}
\end{proposition}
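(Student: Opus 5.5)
Both items reduce to controlling how $\Psi$ varies along $C_i$, and the link is the tangential relation \eqref{eq:tangential}. On $C_i$, parametrized by arc length $s\in[0,2\pi r_i)$, the derivative of $\Psi$ along the circle is $\partial\Psi/\partial\tau_i=\partial\Phi/\partial n_i$. By Assumption~\ref{ass:domain}, satisfied by $R^{(m)}_{\mathrm{obs}}$ as shown after \eqref{eq:R_obs}, $R$ has no zeros or poles on $C_i$. Hence $\nabla\Psi$ is single valued and smooth in a neighbourhood of $C_i$, and a continuous branch of $\Psi$ along $C_i$ exists after lifting.

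For item~1, we use $|\partial\Psi/\partial\tau_i|\le\bar\varepsilon_i$ on $C_i$. The total variation of a lift $\tilde\Psi(s)$ over one turn is then at most $2\pi r_i\bar\varepsilon_i$. The subtle point is the length of the smallest arc containing the values. The net increment of $\Psi$ around $C_i$ is $-2\pi$ times the number of zeros minus poles enclosed, and this is $0$ for a balanced $R$. One should check this for the images: inside an obstacle disk the reflected zeros and poles come in equal numbers by \eqref{eq:recursive}, and for $C_0$ the count of all singularities inside is balanced as well. So $\tilde\Psi$ is a closed (periodic) function of $s$. Take $s_{\min}$ and $s_{\max}$ where it attains its minimum and maximum. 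These split the circle into two arcs, and on each the increment is at least $\max-\min$, so $2(\max-\min)\le\int|\tilde\Psi'|\,ds\le2\pi r_i\bar\varepsilon_i$. Thus the range has length at most $\pi r_i\bar\varepsilon_i$, and by \eqref{eq:Theta_d} the contribution of $C_i$ to $\Theta_\partial$ is a translate of this range modulo $2\pi$. Summing over $i$ gives the total bound.

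For item~2, we want a Lipschitz bound on $\Psi$ along $C_i$ in terms of $L_i$. Write $\Omega'=-R'/R=-\sum_\xi \pm 1/(z-\xi)$, with sign $+$ for zeros and $-$ for poles. The constant $K_m$ does not contribute, and $R^{(m)}_{\mathrm{obs}}$ is a finite rational function, so the sum is finite and exact. Then for $z\in C_i$,
\[
|\partial\Psi/\partial\tau_i|\le\|\nabla\Psi(z)\|=|\Omega'(z)|\le\sum_\xi\frac{1}{|z-\xi|}\le L_i .
\]
Any $z\in C_i$ lies within arc length $h_i/2$ of some node $z_{i,\ell}$. Integrating along the arc gives $d_{2\pi}(\Psi(z),\Psi(z_{i,\ell}))\le L_ih_i/2$. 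By \eqref{eq:certificate} and the triangle inequality for $d_{2\pi}$, we get $d_{2\pi}(\Psi(z),\theta+\pi)>0$. So no point of $C_i$ satisfies $\Psi\equiv\theta+\pi$, and by Lemma~\ref{lem:phase} and \eqref{eq:Theta_d}, $\theta\notin\Theta_\partial$.

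The main obstacle is item~1. It needs the factor $\pi$, not $2\pi$, and this depends on the lift of $\Psi$ being periodic along each circle, i.e.\ on zero net winding. I expect to verify it via the argument principle and the balanced structure of the reflected sets. Without balance, the statement would have to be read for the winding-corrected range.
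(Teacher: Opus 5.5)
Your proposal is correct and follows the paper's proof. Zero winding of $R$ on each $C_i$, from the balanced images in every obstacle disk and outside $C_0$, gives a single-valued lift of $\Psi$, and \eqref{eq:tangential} bounds its oscillation by $\pi r_i\bar\varepsilon_i$; for item~2, the bound $|\Omega'|\le L_i$ on $C_i$ together with the $h_i/2$ sampling distance yields the certificate. Your two-arc total-variation step in item~1 is only a rephrasing of the paper's remark that any two points of $C_i$ are joined by an arc of length at most $\pi r_i$.
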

\begin{proof}
Each reflection maps $N$ poles and $N$ zeros into the same disk, so every obstacle disk, and the exterior of $C_0$, contains as many zeros as poles of $R$. The winding number of $R$ along every $C_i$ is therefore zero, and $\Psi$ has a continuous single-valued lift on $C_i$. On $C_i$, $|\partial\Psi/\partial\tau_i|=|\partial\Phi/\partial n_i|\le\bar\varepsilon_i$, and any two points of $C_i$ are joined by an arc of length at most $\pi r_i$. Hence $\max_{C_i}\Psi-\min_{C_i}\Psi\le\pi r_i\bar\varepsilon_i$, which proves item~1. For item~2, $\Omega'=\sum_p 1/(z-p)-\sum_\zeta 1/(z-\zeta)$, so $|\partial\Psi/\partial\tau_i|\le|\Omega'|\le L_i$ on $C_i$. Every $z\in C_i$ lies within arc length $h_i/2$ of a sample $z_{i,\ell}$, so $d_{2\pi}(\Psi(z),\theta+\pi)\ge d_{2\pi}(\Psi(z_{i,\ell}),\theta+\pi)-L_ih_i/2>0$. By Lemma~\ref{lem:phase}, no point of $\partial D$ lies on $\mathrm{GRL}(\theta)$.
\end{proof}

Proposition~\ref{prop:certificate} separates what is guaranteed from what is approximated. Non-penetration is not inferred from a small residual: it is certified for $\Omega^{(m)}_{\mathrm{obs}}$ by a finite test, and the residual only measures how many phases the test rejects. Together with Theorem~\ref{thm:main}, an admissible phase that passes \eqref{eq:certificate} yields convergent, mutually disjoint and obstacle-respecting paths, with no assumption on the convergence of the reflection series. The constant $L_i$ is conservative, because it ignores the cancellations between zeros and poles; with many images the test therefore requires many samples per circle.

\subsection{The reciprocal root locus as a special case}\label{subsec:rrl}

The RRL constructions of \cite{Landi,BALESTRINO} use the upper half disk
\begin{equation}\label{eq:Dplus}
D_+:=\{z\in\mathbb{C}:|z|<1,\ \Im z>0\}.
\end{equation}
Let $p_i,\zeta_i\in D_+$ and
\begin{equation}\label{eq:G}
G(z)=\frac{\prod_{i=1}^{N}(z-\zeta_i)(z-\overline{\zeta}_i)}{\prod_{i=1}^{N}(z-p_i)(z-\overline{p}_i)},\qquad R_r(z):=G(z)G(1/z),
\end{equation}
with potential $\Omega_r=-\log R_r=\Phi_r+j\Psi_r$. The conjugate factors are the reflections of starts and goals in the real axis, and $z\mapsto1/z$ composed with conjugation is the reflection in the unit circle: $R_r$ is the Milne--Thomson construction for the two boundary components of $D_+$, which here is exact.

\begin{proposition}\label{prop:rrl}
The unit semicircle and the interval $(-1,1)$ are level sets of $\Psi_r$, with $\Psi_r\equiv0$. Hence $\Theta_\partial=\{-\pi\}$ for $D_+$.
\end{proposition}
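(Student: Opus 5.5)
We will show that $R_r$ takes real positive values on both boundary pieces of $D_+$. Then $\Psi_r=-\arg R_r$ admits the branch $\Psi_r\equiv0$ there, and Lemma~\ref{lem:phase} gives the boundary phases. The basic tool is the reflection symmetry of $G$. Because $G$ has real coefficients (its zeros and poles come in conjugate pairs), we have $\overline{G(z)}=G(\bar z)$.

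\emph{Real interval.} For $x\in(-1,1)\setminus\{0\}$, both $x$ and $1/x$ are real. For a real $x$, each factor $(x-\zeta_i)(x-\overline{\zeta}_i)=|x-\zeta_i|^2$ is positive, and likewise for the poles; this holds because $\zeta_i,p_i\in D_+$ are nonreal. Hence $G(x)>0$ and $G(1/x)>0$, so $R_r(x)>0$. The point $z=0$ needs a separate check, since there $G(1/z)$ is evaluated at $\infty$. Since $G$ is balanced (degree $2N$ over $2N$, both monic), $G(1/z)\to1$ as $z\to0$. Thus $R_r$ extends holomorphically to $0$ with $R_r(0)=G(0)>0$. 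More generally, $G(1/z)$ is rational with poles only at $1/\overline{p}_i$ and $1/p_i$, which lie outside the closed unit disk. So $R_r$ is meromorphic near $\overline{D_+}$, and I would note this in passing for Assumption~\ref{ass:domain}.

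\emph{Unit semicircle.} For $|z|=1$ we have $1/z=\bar z$, so $G(1/z)=G(\bar z)=\overline{G(z)}$. Hence $R_r(z)=|G(z)|^2>0$. This is finite and nonzero, because no zero or pole of $G$ lies on the unit circle. The endpoints $\pm1$ are covered by either of the two arguments.

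\emph{Conclusion.} $R_r$ is positive on all of $\partial D_+$. Since $\partial D_+$ is connected, the continuous lift of $\Psi_r$ along it can be chosen identically $0$; in particular both the semicircle and the interval are level sets with $\Psi_r\equiv0$. By \eqref{eq:Theta_d}, $\Theta_\partial=\{0-\pi\}=\{-\pi\}$.

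The only mildly delicate step is the choice of branch and the removable point at $z=0$. Everything else is the conjugate-symmetry identity $\overline{G(z)}=G(\bar z)$ together with $1/z=\bar z$ on the circle. Finally, I would remark that $R_r$ satisfies Assumption~\ref{ass:domain}: its zeros and poles in $\overline{D_+}$ are exactly $\zeta_i,p_i$, since the conjugates lie in the lower half-plane and the reciprocals lie outside the unit disk. Theorem~\ref{thm:main} then applies for every $\theta\neq-\pi$ outside $\Theta_b$.
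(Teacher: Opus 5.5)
Your proof is correct and follows the paper's argument. On the unit circle you use $1/z=\bar z$ and the conjugate symmetry of $G$ to get $R_r=|G|^2>0$, and on the real interval you note that each conjugate-pair factor is positive. Two small additions of yours go beyond the paper: the separate treatment of the removable point $z=0$, where the paper's ``$1/x$ is real'' does not literally apply, and the connectedness argument for choosing the lift $\Psi_r\equiv0$.
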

\begin{proof}
If $|z|=1$, then $1/z=\overline z$, and by the conjugate symmetry of $G$, $R_r(z)=G(z)G(\overline z)=G(z)\overline{G(z)}=|G(z)|^2>0$. If $x\in(-1,1)$, each conjugate pair in $G$ gives a positive factor, so $G(x)>0$; since $1/x$ is real, $G(1/x)>0$ as well, and $R_r(x)>0$.
\end{proof}

\begin{corollary}\label{cor:rrl}
Every maximal solution of $\dot z=\lambda(z)\nabla\Phi_r(z)^\top$ with $z(0)\in D_+$ remains in $D_+$. If moreover $0\notin\Theta_b$, Theorem~\ref{thm:main} applies to the classical RRL ($\theta=0$): the branches join each start to a distinct goal, are pairwise disjoint and do not touch $\partial D_+$.
\end{corollary}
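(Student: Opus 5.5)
The plan is to split Corollary~\ref{cor:rrl} into its two claims and treat the invariance claim first, using only Proposition~\ref{prop:rrl} and Lemma~\ref{lem:invariance}.

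\textbf{Invariance of $D_+$.} Start with $z(0)=z_0\in D_+\setminus\Sigma$ and set $\theta_0:=\Psi_r(z_0)-\pi \pmod{2\pi}$. By Lemma~\ref{lem:phase}, $z_0\in\mathrm{GRL}(\theta_0)$. By Lemma~\ref{lem:invariance}, $\Psi_r(z(t))$ stays equal to $\Psi_r(z_0)$ modulo $2\pi$ along the maximal solution. Suppose the trajectory leaves $D_+$. By continuity it must then hit $\partial D_+$ at some first time $t^\ast$. It cannot do so at $\pm1$ or through $\Sigma$, since the solution is only defined off $\Sigma$ and $\pm1$ lie on the semicircle and real axis anyway. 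Proposition~\ref{prop:rrl} gives $R_r>0$ on $\partial D_+$, so $\Psi_r\equiv0 \pmod{2\pi}$ there. This forces $\Psi_r(z_0)\equiv0$, that is, $R_r(z_0)>0$.

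So the invariance claim reduces to ruling out trajectories starting at points where $R_r$ is positive real. Those points lie on the family $\theta=-\pi$. There are two ways to handle them:
\begin{itemize}
\item Read ``remains in $D_+$'' as holding for $z_0$ off that exceptional family.
\item Argue that even on $\{R_r>0\}$ the trajectory cannot cross $\partial D_+$.
\end{itemize}
For the second option, at a boundary point $x$ that is not a critical point, the level set $\{\Psi_r\equiv0\}$ near $x$ is a single smooth arc (Step~1 of Theorem~\ref{thm:main}), and this arc is the boundary itself. A trajectory on this level set through $x$ is therefore confined to $\partial D_+$, so it could never have come from the interior. By uniqueness (Lemma~\ref{lem:invariance}), the boundary curve is the trajectory through $x$, which contradicts $z_0\in D_+$. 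The corners $\pm1$ and possible boundary critical points need separate care: the flow cannot reach a critical point in finite time, because it is an equilibrium of a Lipschitz field.

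\textbf{Reduction to Theorem~\ref{thm:main}.} Next I verify Assumption~\ref{ass:domain} for $D=D_+$ and $R=R_r$:
\begin{itemize}
\item $\partial D_+$ is a piecewise-smooth Jordan curve.
\item $R_r$ is rational, hence meromorphic near $\overline{D_+}$ once I check it has no poles or zeros on $\partial D_+$.
\item The zeros and poles of $R_r$ are $\zeta_i,p_i$, their conjugates (in the lower half plane), and the points $1/\zeta_i,1/\overline\zeta_i,1/p_i,1/\overline p_i$, which lie outside the closed unit disk since $|p_i|,|\zeta_i|<1$. Hence the only singularities in $\overline{D_+}$ are the original ones.
\item These are simple, provided the given points are distinct and the factors from $G(1/z)$ (whose poles and zeros lie outside the disk) do not cancel with them.
\end{itemize}
A factor $1/z$ might seem to introduce $z=0$, but the factor counts balance, since $G$ has equal degree in numerator and denominator and so $G(1/z)$ has no singularity at $0$.

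Then Proposition~\ref{prop:rrl} gives $\Theta_\partial=\{-\pi\}$, so $\theta=0$ is admissible iff $0\notin\Theta_b$, and all items of Theorem~\ref{thm:main} follow.

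The delicate step is the invariance claim for initial points on the family $\theta=-\pi$ and near the corners $\pm1$, where the boundary level set is not a single smooth arc. I would first try the uniqueness and equilibrium argument above, and otherwise restrict the statement to $\theta\neq-\pi$.
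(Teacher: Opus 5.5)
Your proof is correct and follows the paper's route: $\partial D_+$ lies in the level set $\{\Psi_r\equiv0\}$ and consists of trajectories plus equilibria (including $\pm1$, which are critical because two transversal arcs of $\{R_r>0\}$ meet there), so by uniqueness a solution starting in $D_+$ cannot reach it in finite time. The second claim then follows, as in the paper, by checking Assumption~\ref{ass:domain} for $R_r$ and using $\Theta_\partial=\{-\pi\}$. Your preliminary reduction to initial points with $R_r(z_0)>0$ and the fallback of restricting the statement are unnecessary, since your uniqueness/equilibrium argument already covers every starting point in $D_+$.
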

\begin{proof}
Since $\Psi_r$ is constant on $\partial D_+$, the field $\nabla\Phi_r^\top$ is tangent to $\partial D_+$ wherever it does not vanish. Hence $\partial D_+$ is a union of trajectories of the motion and of finitely many equilibria (the points $\pm1$, where the two pieces of the boundary meet, and the zeros of $\Omega_r'$ on $\partial D_+$). By uniqueness of solutions, a trajectory starting in $D_+$ cannot reach a point of $\partial D_+$ in finite time, so it remains in $D_+$. For the second claim, $R_r$ has $p_i,\zeta_i$ as its only zeros and poles in $\overline{D_+}$, all simple, so Assumption~\ref{ass:domain} holds, and $\theta=0\notin\Theta_\partial=\{-\pi\}$.
\end{proof}

The confinement used geometrically in \cite{Landi,BALESTRINO} is thus a consequence of streamline invariance (Fig.~\ref{fig:rrl}).

\begin{figure}[tb]
\centering
\begin{subfigure}{0.34\textwidth}
    \centering
    \includegraphics[width=\linewidth]{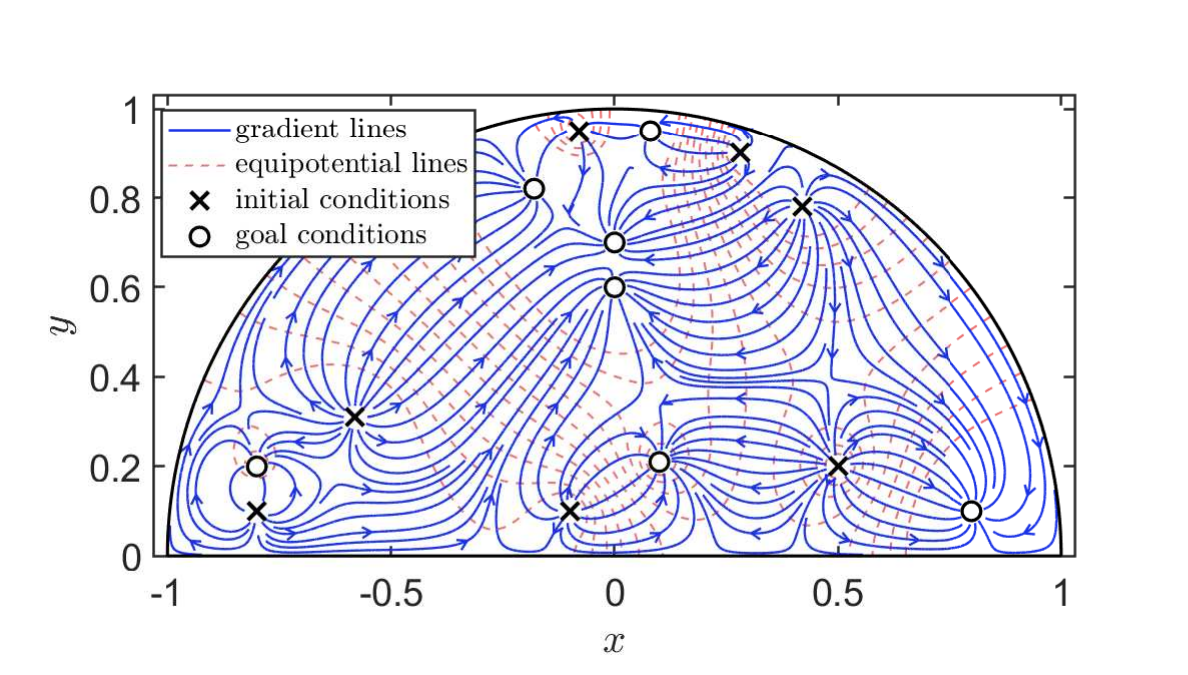}
    \caption{}
\end{subfigure}
\hspace{0.04\textwidth}
\begin{subfigure}{0.34\textwidth}
    \centering
    \includegraphics[width=\linewidth]{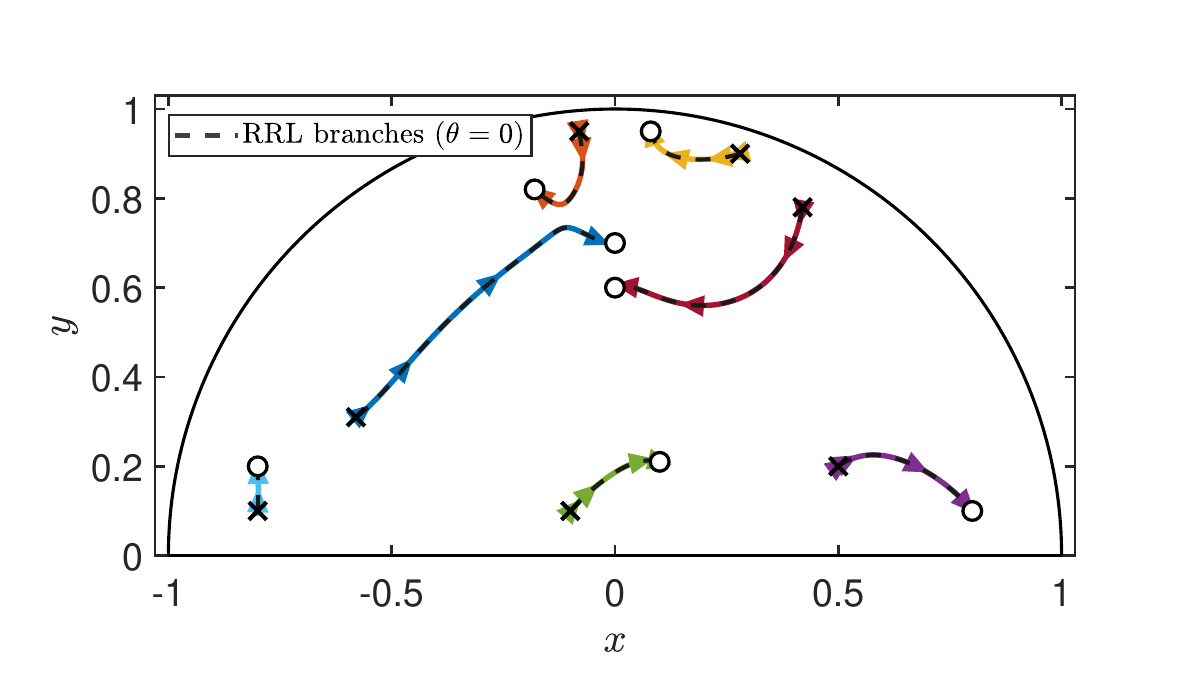}
    \caption{}
\end{subfigure}
\caption{Reciprocal root locus in $D_+$: (a) streamlines and equipotential lines of $\Omega_r$; (b) RRL branches for $\theta=0$ (dashed) and induced trajectories (solid).}
\label{fig:rrl}
\end{figure}

\section{Computational procedure}\label{sec:procedure}

Algorithm~\ref{alg:procedure} summarizes the construction. All quantities are evaluated from the analytic expressions
\begin{equation}\label{eq:derivatives}
\Omega'(z)=\sum_{p}\frac{1}{z-p}-\sum_{\zeta}\frac{1}{z-\zeta},\qquad \Omega''(z)=-\sum_{p}\frac{1}{(z-p)^2}+\sum_{\zeta}\frac{1}{(z-\zeta)^2},
\end{equation}
where the sums run over all zeros and poles of $R^{(m)}_{\mathrm{obs}}$, images included. No grid and no numerical gradient are used.

\paragraph{\texorpdfstring{Critical points and $\Theta_b$}{Critical points and critical phases}} The zeros of $\Omega'$ in $D$ are computed by Newton's method, $z\leftarrow z-\Omega'(z)/\Omega''(z)$, started from a grid of points in $D$. Since a grid search may miss a root, their number is checked with the argument principle. $\Omega'$ has a simple pole at each of the $2N$ zeros and poles of $R$ in $D$ and no other pole there, so the number of its zeros in $D$ is
\begin{equation}\label{eq:argument}
N_{\mathrm{crit}}=2N+\frac{1}{2\pi j}\oint_{\partial D}\frac{\Omega''(z)}{\Omega'(z)}\,dz ,
\end{equation}
with $C_0$ oriented counterclockwise and the obstacle circles clockwise. The principle requires $\Omega'\neq0$ on the contour. When a circle is a streamline, $\Omega'$ typically vanishes at stagnation points on it. A zero $b\in\partial D$ gives the phase $\pi-\arg R(b)$, which belongs to $\Theta_\partial$ and is already excluded; the contour is then moved slightly into $D$, which does not change the count as long as no critical point lies in the thin band between the two contours. If the count \eqref{eq:argument} differs from the number of roots found, the grid is refined. The critical phases follow from \eqref{eq:Theta_b}.

\paragraph{Boundary phases and certificate} The samples $\Psi(z_{i,\ell})$ do not depend on $\theta$, so the test \eqref{eq:certificate} is evaluated on a whole grid of phases at the cost of one evaluation per sample. The selected phase maximizes the margin
\begin{equation}\label{eq:margin}
\mu(\theta):=\min\Bigl\{d_{2\pi}(\theta,\Theta_b),\ \min_{i,\ell}\bigl[d_{2\pi}(\Psi(z_{i,\ell}),\theta+\pi)-L_ih_i/2\bigr]\Bigr\},
\end{equation}
possibly together with a secondary criterion such as path length or clearance.

\paragraph{Branch tracing} Along a branch, $\Omega(z)=\Phi+j(\theta+\pi)$, so the log-gain $\Phi$ is a natural parameter and $dz/d\Phi=1/\Omega'(z)$. Each branch is started at $z=p_i+\epsilon e^{j\beta_i}$, with $\beta_i$ the departure angle \eqref{eq:departure}. A predictor step $z\leftarrow z+\Delta\Phi/\Omega'(z)$ is followed by Newton corrections on $\Omega(z)=\Phi+j(\theta+\pi)$, with the increment of $\Omega$ computed as $-\log(R(z)/R(z_{\mathrm{prev}}))$ to avoid branch cuts. The tracing stops when $z$ is within a tolerance of a zero, which identifies $\sigma_\theta(i)$. The drift of $\Psi$ from $\theta+\pi$ along the traced branches is monitored; in the reference examples of Section~\ref{subsec:reference} it stays below $10^{-9}$. The curves are finally mapped back to the workspace by $f^{-1}$.

\begin{algorithm}[tb]
\caption{Certified GRL planning}\label{alg:procedure}
\begin{algorithmic}[1]
\Require Polygonal workspace $P$ (inflated), starts and goals, reflection depth $m$, phase grid $\mathcal G$
\State Compute the conformal map $f:P\to D$; set $p_i=f(\text{start}_i)$, $\zeta_i=f(\text{goal}_i)$
\State Generate the images \eqref{eq:recursive} up to depth $m$ and form $R^{(m)}_{\mathrm{obs}}$ \eqref{eq:R_obs}
\State Find the zeros $b$ of $\Omega'$ in $D$ by Newton's method from a grid
\State Compute $N_{\mathrm{crit}}$ by \eqref{eq:argument}; refine the grid until the counts agree
\State Form $\Theta_b$ by \eqref{eq:Theta_b}
\For{$i=0,\dots,N_{\mathrm{obs}}-1$}
  \State Compute $\bar\varepsilon_i$, $L_i$ and the lifted samples $\Psi(z_{i,\ell})$, $\ell=1,\dots,M_i$
\EndFor
\State $\mathcal G_{\mathrm{ok}}\gets\{\theta\in\mathcal G:\ \theta\notin\Theta_b\text{ and \eqref{eq:certificate} holds}\}$
\If{$\mathcal G_{\mathrm{ok}}=\emptyset$}
  \State \Return failure (increase $m$ or $M_i$)
\EndIf
\State $\theta^\ast\gets\arg\max_{\theta\in\mathcal G_{\mathrm{ok}}}\mu(\theta)$
\For{$i=1,\dots,N$}
  \State Trace $B_i$ from $p_i$ along \eqref{eq:departure} by predictor--corrector in $\Phi$ until a zero $\zeta_{\sigma(i)}$ is reached
\EndFor
\State Map the branches back by $f^{-1}$; compute path lengths and clearances
\Ensure Paths, assignment $\sigma_{\theta^\ast}$, margin $\mu(\theta^\ast)$; or failure
\end{algorithmic}
\end{algorithm}

The cost of every step is linear in the number $2N+N_I^{(m)}$ of singularities, times the number of evaluation points; the critical-point search dominates, and all steps are parallel.

\section{Numerical results}\label{sec:numerics}

\subsection{Reference examples with polygonal obstacles}\label{subsec:reference}

The \emph{Flagship} example has $N=5$ robots and $N_{\mathrm{obs}}=5$ boundary components, the outer one included (Fig.~\ref{fig:conf_reduction}); the map is normalized with $\alpha=2j$. The \emph{Warehouse} example has $N=3$ robots and $N_{\mathrm{obs}}=7$, with shelves as obstacles; here $\alpha=6.5+3.5j$. In both cases the obstacle-compatible potential is built by reflections up to depth $m$ (Figs.~\ref{fig:flagship}(a) and~\ref{fig:warehouse}(a)).

Table~\ref{tab:certificate} reports the certificate for $m=1,2,3$. In both examples the number of critical points found equals the one given by the argument principle, so $\Theta_b$ is complete, and the winding of $R$ is zero on every circle, as used in Proposition~\ref{prop:certificate}. The certified fraction of the phase circle grows with $m$, consistently with item~1 of the proposition and with the decrease of the residual. The phase $\theta=0$ used in Figs.~\ref{fig:flagship} and~\ref{fig:warehouse} is certified for $m\ge2$ in the Flagship example and for every $m$ in the Warehouse example, so the plotted paths ($m=3$) are covered by Theorem~\ref{thm:main}.

The rejected case is instructive. For the Flagship with $m=1$, the certificate fails for $\theta=0$ by $2\cdot10^{-3}$\,rad, and the traced branch does enter an obstacle, by $0.04$ in the units of $D$. For $m=2$ and $m=3$ the same branch is certified and keeps a clearance of $0.004$ and $0.025$. Here the rejection is not a false alarm: the test detects a configuration in which the truncation is too coarse.

\begin{table}[tb]
\centering
\caption{Certificate on the reference examples: largest residual $\max_i\bar\varepsilon_i$, number of critical points found/predicted by \eqref{eq:argument}, fraction of certified phases, and margin of $\theta=0$ (distance to $\Theta_b\cup\Theta_\partial$).}
\label{tab:certificate}
\begin{tabular}{@{}llcccc@{}}
\toprule
 & $m$ & $\max_i\bar\varepsilon_i$ & crit. & certified & margin of $\theta=0$\\
\midrule
Flagship  & 1 & 3.48 & 6/6 & 27\% & rejected \\
          & 2 & 2.01 & 7/7 & 53\% & 0.04 rad \\
          & 3 & 1.04 & 7/7 & 70\% & 0.33 rad \\
\midrule
Warehouse & 1 & 15.1 & 3/3 & 44\% & 0.30 rad \\
          & 2 & 8.39 & 5/5 & 73\% & 0.23 rad \\
          & 3 & 2.87 & 4/4 & 83\% & 0.22 rad \\
\bottomrule
\end{tabular}
\end{table}

\begin{figure}[tb]
\centering
\begin{subfigure}{0.32\textwidth}\centering\includegraphics[width=\linewidth]{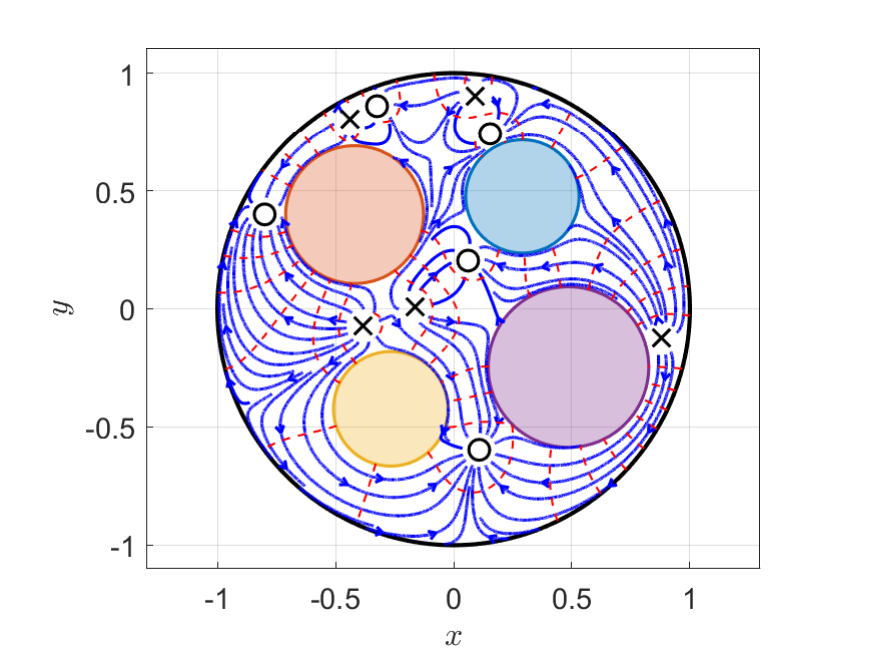}\caption{}\end{subfigure}\hfill
\begin{subfigure}{0.32\textwidth}\centering\includegraphics[width=\linewidth]{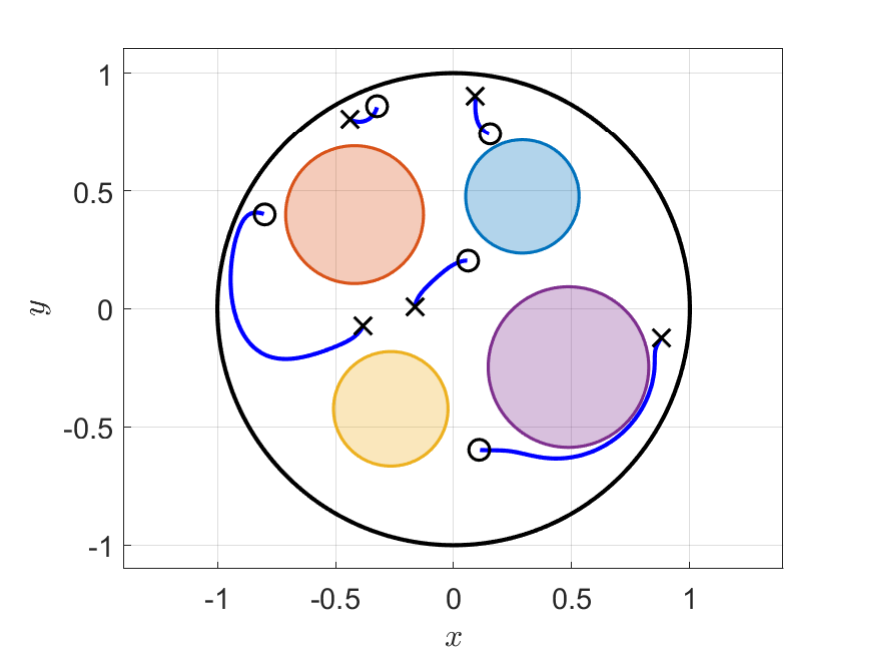}\caption{}\end{subfigure}\hfill
\begin{subfigure}{0.32\textwidth}\centering\includegraphics[width=\linewidth]{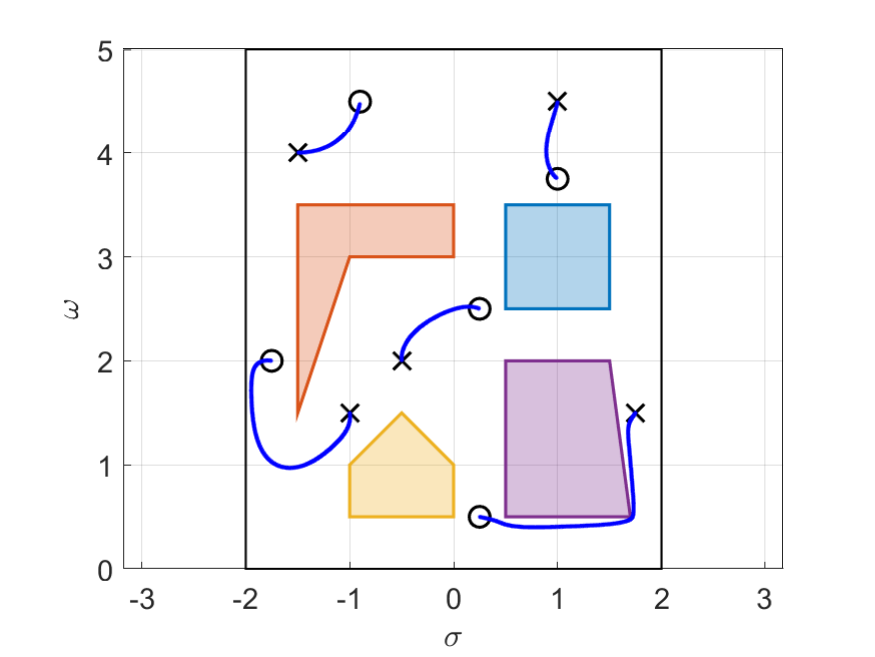}\caption{}\end{subfigure}
\caption{Flagship example, $m=3$: (a) streamlines (blue) and equipotential lines (red) of $\Omega^{(3)}_{\mathrm{obs}}$ in $D$; (b) GRL branches for $\theta=0$; (c) paths mapped back to the workspace. Crosses: starts; circles: goals.}
\label{fig:flagship}
\end{figure}

\begin{figure}[tb]
\centering
\begin{subfigure}{0.32\textwidth}\centering\includegraphics[width=\linewidth]{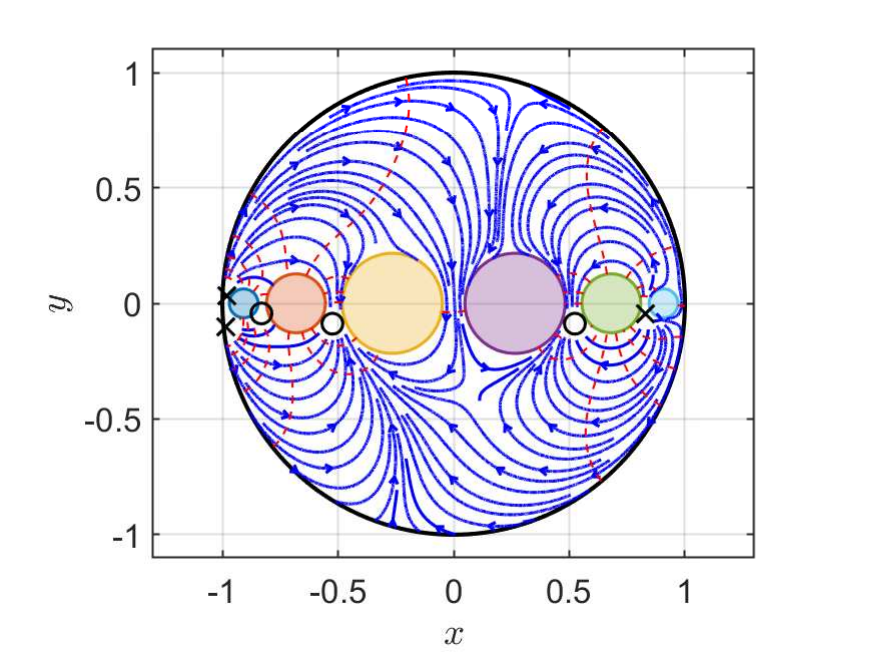}\caption{}\end{subfigure}\hfill
\begin{subfigure}{0.32\textwidth}\centering\includegraphics[width=\linewidth]{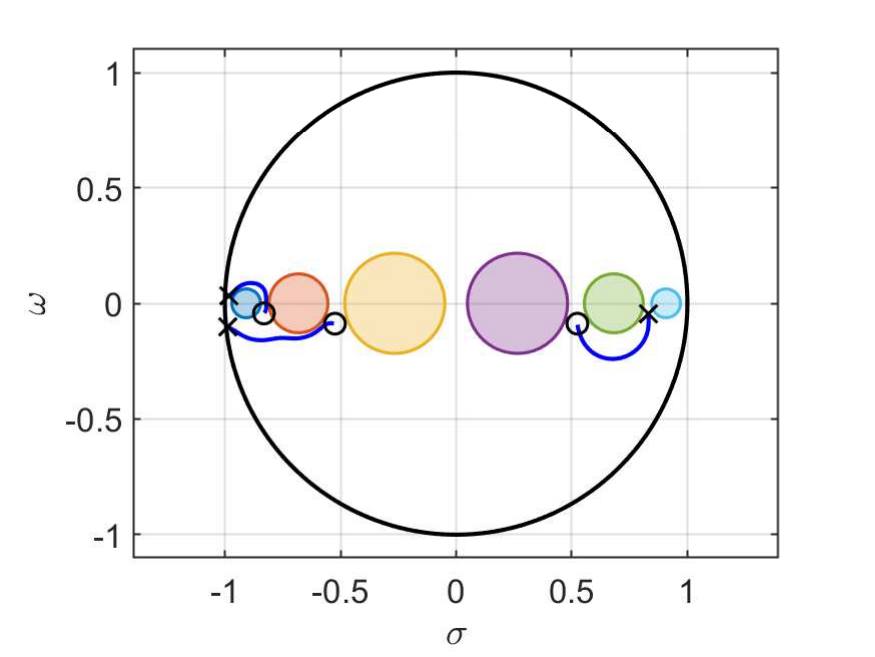}\caption{}\end{subfigure}\hfill
\begin{subfigure}{0.32\textwidth}\centering\includegraphics[width=\linewidth]{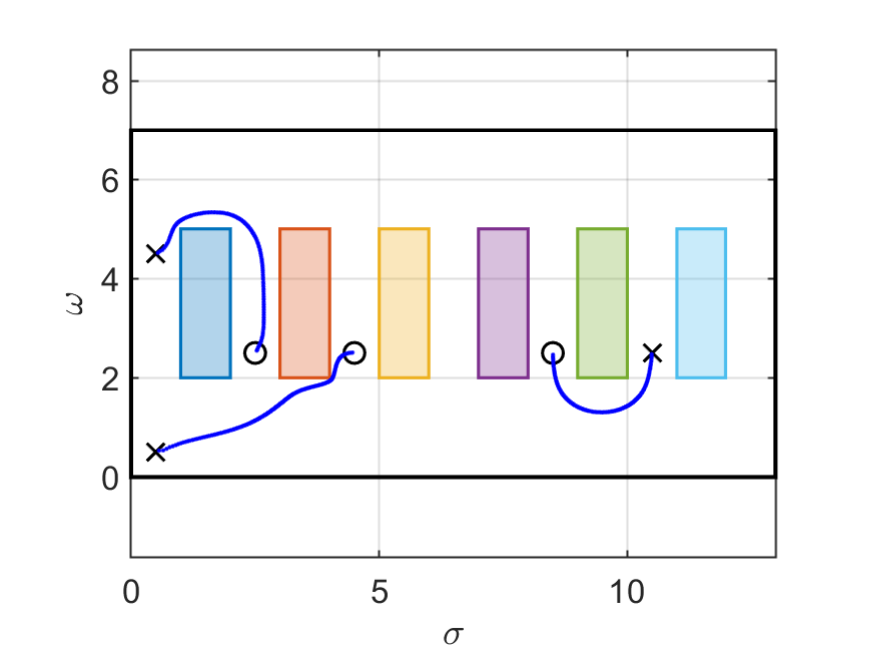}\caption{}\end{subfigure}
\caption{Warehouse example, $m=3$: (a) field of $\Omega^{(3)}_{\mathrm{obs}}$ in $D$; (b) branches for $\theta=0$; (c) paths in the workspace, with the shelves as obstacles.}
\label{fig:warehouse}
\end{figure}

\subsection{Monte Carlo comparison}\label{subsec:mc}

To assess the construction beyond hand-picked configurations, we generated 50 random instances with $N=5$ in each of the two circular domains, with starts and goals at distance at least $0.04$ from the circles and $0.08$ from each other, and $m=2$. The proposed method uses the phase selected by Algorithm~\ref{alg:procedure}. It is compared with two baselines, both using the assignment $\pi$ that minimizes the sum of squared distances:
\begin{itemize}[leftmargin=*,itemsep=2pt,topsep=3pt]
\item \emph{independent HFM}: one harmonic potential per robot (single start--goal pair, same images), each robot following the shortest of eight streamlines (equispaced phases) that reach its goal inside $D$;
\item \emph{APF}: Khatib potentials \cite{Khatib_Potential_} with obstacle and inter-robot repulsion, integrated simultaneously.
\end{itemize}
Results are in Table~\ref{tab:mc}, where $L$ is the total path length and $L_0=\sum_i|p_i-\zeta_{\pi(i)}|$ the straight-line length with the baseline assignment. $L_0$ is the same for the three methods, so the ratios are directly comparable; table entries are medians over the instances.

\begin{table}[tb]
\centering
\caption{Monte Carlo comparison (50 instances per domain, $N=5$, medians).}
\label{tab:mc}
\begin{tabular}{@{}lcccc@{}}
\toprule
 & \multicolumn{2}{c}{Flagship domain} & \multicolumn{2}{c}{Warehouse domain}\\
 & success & $L/L_0$ & success & $L/L_0$\\
\midrule
GRL (certified)      & 100\% & 1.15 & 100\% & 1.13\\
Independent HFM      & 74\%$^\ast$ & 1.06 & 78\%$^\ast$ & 1.03\\
APF                  & 24\% & 1.12 & 6\% & 1.00$^\dagger$\\
\bottomrule
\end{tabular}

\smallskip
{\footnotesize $^\ast$All robots reach their goals; the paths cross in 26\% and 22\% of the instances. $^\dagger$Median over the 3 successful instances.}
\end{table}

All 100 GRL instances were certified: the critical set was complete, every robot reached a distinct goal inside $D$, and the median margin of the selected phase was $0.65$ and $0.52$\,rad. Independent HFM produces shorter paths, but in about one instance out of four two paths cross. Crossing paths do not necessarily collide, since this depends on the timing, but they rule out the timing-independent safety given by item~3 of Theorem~\ref{thm:main}. APF fails mostly because of local minima behind the obstacles. The price of the certified construction is a moderate increase of path length, about $10\%$ with respect to independent HFM (ratio of the median values of $L/L_0$). The median computation time per instance is comparable for the two harmonic methods (18--42\,s in an interpreted, single-thread Octave implementation), and is dominated by the critical-point search.

\subsection{Forest: ten robots and ten obstacles}\label{subsec:forest}

The last example tests the construction on a denser scene, directly in a circular domain. Ten robots cross a ``forest'' of ten circular pillars in the unit disk (Table~\ref{tab:forest}). The starts are $p_i=0.84\,e^{j\alpha_i}$ with $\alpha_i$ equally spaced from $125^\circ$ to $235^\circ$, and the goals are $\zeta_\ell=0.84\,e^{j\beta_\ell}$ with $\beta_\ell$ equally spaced from $-55^\circ$ to $55^\circ$; the robots must therefore pass through the obstacles to reach the other side. With $N_{\mathrm{obs}}=11$ and depth $m=3$, \eqref{eq:NI} gives $N_I^{(3)}=24\,420$ image singularities.

\begin{table}[tb]
\centering
\caption{Forest example: centers and radii of the obstacles in the unit disk.}
\label{tab:forest}
\small
\begin{tabular}{@{}cccccc@{}}
\toprule
$i$ & $c_i$ & $r_i$ & $i$ & $c_i$ & $r_i$\\
\midrule
1 & $-0.414-0.326j$ & 0.092 & 6  & $\phantom{-}0.392+0.106j$ & 0.079\\
2 & $\phantom{-}0.082-0.503j$ & 0.077 & 7  & $-0.281+0.409j$ & 0.086\\
3 & $-0.069+0.108j$ & 0.090 & 8  & $\phantom{-}0.183+0.397j$ & 0.077\\
4 & $\phantom{-}0.456-0.268j$ & 0.086 & 9  & $-0.375-0.024j$ & 0.081\\
5 & $\phantom{-}0.196-0.257j$ & 0.060 & 10 & $-0.111-0.148j$ & 0.096\\
\bottomrule
\end{tabular}
\end{table}

Figure~\ref{fig:forest}(a) shows the resulting field. The critical-point search finds 18 zeros of $\Omega'$ in $D$, and the argument principle \eqref{eq:argument} gives the same number, so $\Theta_b$ consists of 18 phases. The residual is $0.16$ on the outer circle and between $0.81$ and $1.65$ on the obstacles. The boundary phases were evaluated on $2\,232$ to $9\,855$ samples per circle (Fig.~\ref{fig:forest}(c)): each circle removes an arc of $6.4^\circ$ to $18.0^\circ$, within the bound $\pi r_i\bar\varepsilon_i$ of Proposition~\ref{prop:certificate}, and $\Theta_\partial$ covers $120^\circ$ in total. About $69\%$ of the phase circle is admissible. With this many images the constant $L_i$ in \eqref{eq:certificate} is large, so we report the sampled boundary phases, whose spacing is much smaller than the margins involved.

The selected phase is $\theta^\ast=106.2^\circ$, at $51.7^\circ$ from $\Theta_\partial$ and $25.8^\circ$ from $\Theta_b$. Figure~\ref{fig:forest}(b) shows the ten branches. They stay inside $D$, are pairwise disjoint, and end at ten distinct goals with $\sigma(i)=11-i$: each robot reaches the goal at the same height on the opposite side, and the paths form lanes between the pillars. The markers show the positions of the robots at four common values of the gain, as in item~4 of Theorem~\ref{thm:main}: at every instant the robots are the ten closed-loop roots of $1+ke^{j\theta^\ast}R=0$ in $D$. The minimum distance between two paths is $0.066$ and the minimum distance from the obstacles and the wall is $0.011$, in the units of $D$. The total length is $15.35$, against $13.82$ for straight segments with the same assignment, a ratio of $1.11$, in line with the Monte Carlo results.

\begin{figure}[tb]
\centering
\begin{subfigure}[b]{0.32\textwidth}\centering\includegraphics[width=\linewidth]{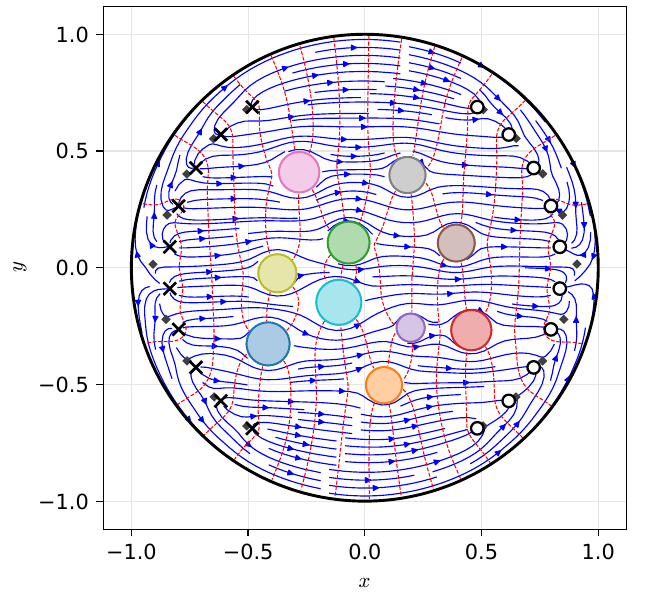}\caption{}\end{subfigure}\hfill
\begin{subfigure}[b]{0.32\textwidth}\centering\includegraphics[width=\linewidth]{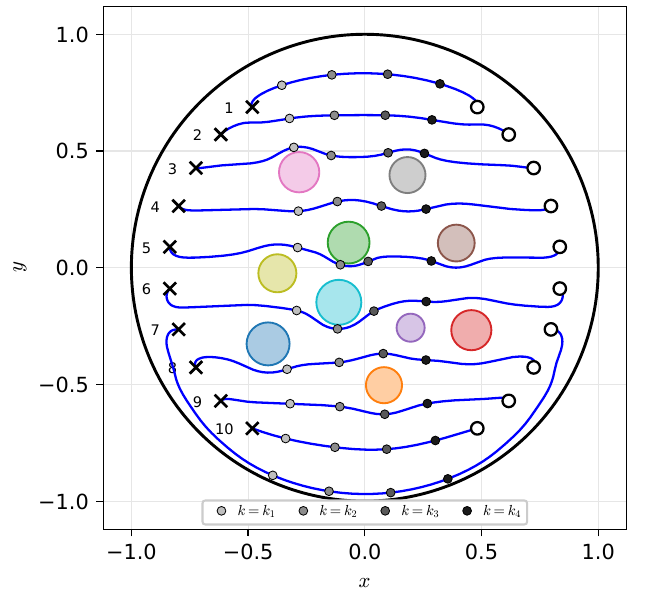}\caption{}\end{subfigure}\hfill
\begin{subfigure}[b]{0.33\textwidth}\centering\includegraphics[width=\linewidth]{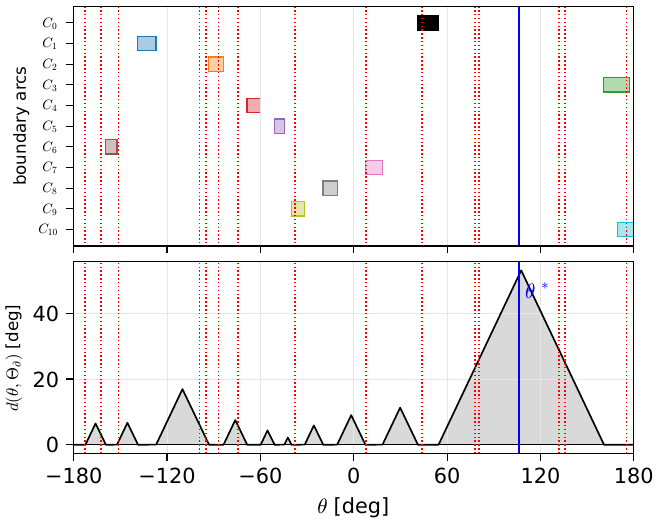}\caption{}\end{subfigure}
\caption{Forest example. (a) Streamlines (blue) and equipotential lines (red, dashed) of $\Omega^{(3)}_{\mathrm{obs}}$; grey diamonds: the 18 critical points. (b) Branches of $\mathrm{GRL}(\theta^\ast)$, $\theta^\ast=106.2^\circ$, and positions of the robots at four common gains $k_1<k_2<k_3<k_4$; robot $i$ reaches $\zeta_{11-i}$. (c) Top: arc of $\Theta_\partial$ contributed by each circle ($C_0$ is the outer wall); bottom: distance of $\theta$ from $\Theta_\partial$; red dotted lines: critical phases; blue line: $\theta^\ast$.}
\label{fig:forest}
\end{figure}

As a further check, the branches were traced for all 47 admissible phases of a $5^\circ$ grid: in every case they stayed in $D$, were disjoint and ended at distinct goals, and the four assignments observed changed only across critical phases (Remark~\ref{rem:sigma}).

\section{Conclusions}\label{sec:conclusions}

We have shown that the GRL of a rational function encoding starts and goals, read through its logarithmic potential after a conformal map, provides certified coordinated paths for $N$ interchangeable robots in polygonal environments: for every admissible phase the robots reach distinct goals along disjoint paths, with an assignment produced by the construction and a time law given by the gain. For potentials computed with finitely many reflections, a finite test certifies obstacle safety, and the RRL is recovered as a special case.

The method is offline and geometric: tracking, dynamics and online replanning are left to the execution layer, robot size enters through obstacle inflation, and optimality of the paths is not claimed. The main open directions are faster series-based obstacle potentials \cite{DELILLO_pot_flow_series,trefethen_series_solution} for larger scenes, a sharper certificate exploiting the cancellations between image zeros and poles, a systematic study of the dependence on the phase and on the normalization $\alpha$, and the coupling with an online layer.

\section*{Code availability}\label{sec:code}

The MATLAB implementation, the verification scripts and the code of the Forest example are available at \url{https://github.com/mfrascarelli/grl-certified-planning}. An interactive companion notebook, which reproduces the main objects of the paper on small examples, runs in the browser at \url{https://mfrascarelli.github.io/grl-certified-planning/}.

\section*{Acknowledgments}

This work was supported by the Italian Ministry for University and Research (MUR) in the framework of the FoReLab project (Departments of Excellence).

\renewcommand{\bibfont}{\footnotesize}\setlength{\bibsep}{1pt}
\bibliographystyle{elsarticle-num}
\bibliography{GRL_arxiv}

\end{document}